\title{Maximum Weighted Loss Discrepancy}
\author{Fereshte Khani, Aditi Raghunathan, Percy Liang\\
	Department of Computer Science, Stanford University \\
	\texttt{\{fereshte\texttt{},aditir\}@stanford.edu, pliang@cs.stanford.edu} \\
}
\newcommand\sG{\ensuremath{\mathcal{G}}}
\newcommand\sO{\ensuremath{\mathcal{O}}}
\newcommand\sX{\ensuremath{\mathcal{X}}}
\newcommand\sY{\ensuremath{\mathcal{Y}}}
\newcommand{\var}{\text{Var}} 
\newcommand\p[1]{\ensuremath{\left( #1 \right)}} 
\newcommand\pb[1]{\ensuremath{\left[ #1 \right]}} 
\newcommand\half{\ensuremath{\frac{1}{2}}}
\newcommand\eqdef{\ensuremath{\stackrel{\rm def}{=}}} 
\newcommand{\1}{\mathbb{I}} 
\newcommand\refeqn[1]{(\ref{eqn:#1})}
\newcommand\refeqns[2]{(\ref{eqn:#1}) and (\ref{eqn:#2})}
\newcommand\refsec[1]{Section~\ref{sec:#1}}
\newcommand\reffig[1]{Figure~\ref{fig:#1}}
\newcommand\reftab[1]{Table~\ref{tab:#1}}
\newcommand\refthm[1]{Theorem~\ref{thm:#1}}
\newcommand\reflem[1]{Lemma~\ref{lem:#1}}
\newcommand\refprop[1]{Proposition~\ref{prop:#1}}
\newcommand\refdef[1]{Definition~\ref{def:#1}}
\newcommand\refcor[1]{Corollary~\ref{cor:#1}}
\newcommand{\E}{\ensuremath{\mathbb{E}}} 
\theoremstyle{definition}
\newtheorem{remark}{Remark}
\newenvironment{sproof}{%
	\proof}{\endproof}
\newcommand{\pr}{\ensuremath{\mathbb{P}}}
\DeclareMathOperator*{\argmax}{arg\,max}
\newcommand{\pab}[1]{\ensuremath{\left|#1\right|}}
\newcommand{\dfp}{D\textsubscript{FPR}}
\newcommand{\dfn}{D\textsubscript{FNR}}
\newcommand{\ws}{\ensuremath{w^{1/2}}}
\newcommand{\wsa}{\ensuremath{w^{1/2}_\text{A}}}
\newcommand{\weq}{\ensuremath{w^{0}}}
\newcommand{\wk}{\ensuremath{w^{k}}}
\newcommand{\clv}{CLV}
\newcommand{\lv}{LV}
\newcommand{\lr}{LR}
\newcommand{\mwld}{\text{MWLD}}
\begin{document}

\maketitle

\begin{abstract}
Though machine learning algorithms excel at minimizing the average loss over a population,
  this might lead to large discrepancies between the losses across groups within the population.
To capture this inequality, we introduce and study a notion we call maximum weighted loss discrepancy (\mwld{}),
  the maximum (weighted) difference between the loss of a group and the loss of the population.
We relate \mwld{} to group fairness notions and robustness to demographic shifts.
We then show \mwld{} satisfies the following three properties:
1) It is statistically impossible to estimate \mwld{} when all groups have equal weights.
2) For a particular family of weighting functions, we can estimate \mwld{} efficiently.
3) \mwld{} is related to loss variance, a quantity that arises in generalization bounds.
We estimate \mwld{} with different weighting functions on four common datasets from the fairness literature.
We finally show that loss variance regularization can halve the loss variance of a classifier
and hence reduce \mwld{} without suffering a significant drop in accuracy.
%
\end{abstract}
\section{Introduction}

Machine learning algorithms have a profound effect on people, especially around critical decisions such as banking and criminal justice \citep{berk2012criminal,barocas2016}.
It has been shown that standard learning procedures (empirical risk minimization)
can result in classifiers where some demographic groups suffer significantly larger losses than the average population
\citep{angwin2016machine,bolukbasi2016man}.
In this work, we consider the setting where demographic information is unavailable \cite{hashimoto2018repeated}, 
so we would like to ensure that no group suffers a loss much larger than average.

We are interested in measuring the maximum weighted loss discrepancy (\mwld) of a model,
which is, over all groups $g$, the maximum difference between the loss of a group $\E[\ell \mid g = 1]$ and the population loss $\E[\ell]$,
weighted by a function $w$ that quantifies the importance of each group:
\begin{align}
\mwld(w) = \max_{g} w(g) \pab {\E [\ell\mid g=1] - \E [\ell]}.
\end{align}
$\mwld$ captures various notions of group fairness;
for example, equal opportunity \citep{hardt2016} is \mwld{} with $\ell$ capturing false positives and weighting function $w(g) = 1$
for sensitive groups (e.g., defined by race, gender) and $0$ for all other groups.
We also show that we can bound the loss of a population with shifted demographics:
if we tilt the original distribution toward any group $g$ based on $w(g)$, the loss on the new distribution can bounded using $\mwld(w)$.

We consider estimating $\mwld$ from finite data by plugging in the empirical distribution.
There are two considerations:
(i) does the estimator converge? and (ii) can we compute the estimator efficiently?
The answers to these two questions depend on the weighting function.
We first show that for the uniform weighting function ($w^0(g) = \1[\E [g] > 0]$), we cannot estimate $\mwld(w^0)$ from finite samples (\refprop{impossibility}).
Next, we study a family of decaying weighting functions ($w^k(g) = \E[g]^k$), where
$k$ governs how much we account for the loss discrepancy of \textit{small} groups.
For this family, we show that the plug-in estimator (i) is efficient to compute and (ii) converges to the population $\mwld(w^k)$ (\refthm{auditor}).

Next, we show a connection to loss variance (\refprop{groupToVar}), an important quantity that arises in generalization bounds \citep{maurer2009empirical}
and is used as a regularization scheme \citep{mnih2008empirical,audibert2009exploration,shivaswamy2010empirical,namkoong2017variance}.
In particular, $\mwld(\ws)$ provides us with lower and upper bounds for the loss variance.
We also propose an extension called \emph{coarse loss variance}, which considers only a set of sensitive groups,
allowing us to incorporate knowledge about sensitive attributes.
%

We validate maximum weighted loss discrepancy on four common datasets from the fairness literature: predicting recidivism, credit rating, income, and crime rate.
We fit a logistic regression on these datasets and estimate its $\mwld(w^k)$ for various $k$.
We observe that \mwld{} with smaller $k$ converges more slowly to the population quantity. 
We also observe the group attaining \mwld{} shrinks as $k$ decreases.
We then use loss variance (LV) and coarse loss variance (CLV) regularization to train models.
Our empirical findings are as follows:
1) We halve the loss variance with only a small increase in the average loss.
2) In some cases, using loss variance as a regularizer simultaneously reduces the classification loss (higher accuracy) and loss variance (lower loss discrepancy).

\paragraph{Setup.} Consider the prediction task of mapping each input $x \in \sX$ to a probability distribution over an output space $\sY$. 
Let $h$ be a predictor which maps each input $x$ to a probability distribution over $\sY$;
for binary classification, $\sY = \{0, 1\}$ and a predictor $h : \sX \to [0, 1]$ returns the probability of $y = 1$.
Let $\ell(h, z)$ be the (bounded) loss incurred by predictor $h$ on individual $z$---e.g., the zero-one or logistic loss.
Let $p^\star$ denote the underlying distribution over individuals $z = (x, y)$;
all expectations are computed with respect to $z \sim p^\star$.
Define a \emph{group} to be a measurable function $g: \sX \times \sY \to \{0, 1\}$ such that $g(z) = 1$ if individual $z$ is in the group and $0$ otherwise.
Let $\sG$ be the set of all groups.
When clear from context, we use $\ell$ to denote $\ell(h, z)$ and $g$ to denote $g(z)$,
so that $\E[\ell]$ is the population loss and $\E[\ell \mid g = 1]$ is the loss of group $g$.

\section{Maximum Weighted Loss Discrepancy}
\label{sec:mwld}
We now introduce our central object of study:
\begin{definition}[Maximum weighted loss discrepancy (\mwld)]
	\label{def:mwld}
  For a weighting function $w : \sG \to [0, 1]$, loss function $\ell$, and predictor $h$,
  define the maximum weighted loss discrepancy $\mwld(w, \ell, h)$
  to be the maximum difference between the loss of a group $g$ and the population loss, weighted by $w(g)$:
	\begin{align}
	\label{eqn:mwld}
	\mwld\left(w, \ell, h\right) &\eqdef \sup \limits_{g \in \sG} w(g) \left| \E[\ell \mid g = 1] -  \E[\ell] \right|,
	\end{align}
\end{definition}
where the weighting function $w$ (e.g., $w(g) = \E[g]^{1/2}$) intuitively controls the importance of group $g$.

\paragraph{Group fairness interpretation.} By rearranging the terms of \refeqn{mwld},
we can bound the loss discrepancy of any group in terms of the group weight and the maximum weighted loss discrepancy:
\begin{align}
\label{eqn:guarantee}
\pab {\E[\ell \mid g = 1] - \E [\ell] } \leq  \frac{\mwld(w, \ell, h)}{w(g)},
\end{align}
where the bound is tighter for larger $w(g)$.
Existing statistical notions of fairness such as equal opportunity \citep{hardt2016} can be viewed as enforcing $\mwld$ to be small
for a weighting function $w$ that is $1$ on sensitive groups (e.g., different races) and $0$ on all other groups;
see Appendix~\ref{sec:previous_statistical_notions} for further discussion.

\paragraph{Distributional shift interpretation.}
We can use $\mwld$ to bound the loss on a population with shifted demographics.
For any group $g$, define the mixture distribution $q(\cdot) \eqdef w(g) p^\star(\cdot \mid g = 1) + (1 - w(g)) p^\star(\cdot \mid g = 0)$,
which tilts the original distribution $p^\star$ more towards group $g$ (assuming $w(g) \ge \E[g]$).
Then via simple algebra (\refprop{robustness} in Appendix~\ref{sec:appendix}),
the loss under this new distribution $q$ can be controlled as follows:
\begin{align}
\label{eqn:demographic_shift_guarantee}
\E_{z \sim q}[\ell] \le \E_{z \sim p^\star} [\ell] + \mwld(w, \ell, h).
\end{align}
This is similar in spirit to distributionally robust optimization (DRO) using a max-norm metric \citep{duchi2018learning},
but the difference is that the mixture coefficient is group-dependent.


How do we now operationalize $\mwld$?
After all, the supremum over all groups $g$ in $\mwld$ \refeqn{mwld} appears daunting.
In \refsec{estimating_mwld}, we show how we can efficiently estimate $\mwld$ for a restricted family of weighting functions.

\section{Estimating Maximum Weighted Loss Discrepancy (\mwld)}
\label{sec:estimating_mwld}

We now focus on the problem of estimating $\mwld$ from data.
For simplicity of notation, we write $\mwld(w)$ instead of $\mwld(w,\ell,h)$.
Given $n$ points $z_1, \dots, z_n \sim p^\star$, our goal is to derive an estimator
 $\widehat\mwld_n(w)$ that is (i) efficient to compute and (ii) accurately approximates $\mwld(w)$.
 Formally:
 \begin{align}
 \label{eqn:estimating}
   \forall \epsilon, \delta \in (0,\half): \quad \pr\pb{\pab {\mwld(w) - \widehat \mwld_n(w)} \ge \epsilon} \le \delta \text{\quad and } n=\text{poly}(\log(1/\delta), 1/\epsilon).
 \end{align}
Whether this goal is achievable depends on the weighting function.
Our first result is that we cannot estimate $\mwld$ for the uniform weighting function ($\weq \eqdef \1 [\E [g] > 0]$):

\begin{restatable}{proposition}{impossibility}
	\label{prop:impossibility}
	For any loss function $0\le \ell\le 1$ and predictor $h$,
  if $(\ell, h)$ is non-degenerate ($\min_z \ell(h,z) = 0$ and $\max_z \ell(h,z) = 1$),
  then there is no estimator $\widehat\mwld_n(\weq, \ell, h)$ satisfying \refeqn{estimating}.
\end{restatable}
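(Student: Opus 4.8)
The plan is to exhibit two distributions over individuals that are statistically indistinguishable from $n$ samples yet assign wildly different values to $\mwld(\weq)$, so that no estimator can be accurate on both. The conceptual heart is that with the uniform weight $\weq = \1[\E[g] > 0]$ an arbitrarily small group still counts fully toward the supremum in \refeqn{mwld}: a tiny subpopulation carrying extreme loss therefore pins $\mwld(\weq)$ near its maximum while remaining essentially invisible to any finite sample.

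Concretely, I would use non-degeneracy to fix individuals $z_0, z_1 \in \sX \times \sY$ with $\ell(h, z_0) = 0$ and $\ell(h, z_1) = 1$. For a small parameter $\eta > 0$, let $p_1$ place all its mass on $z_0$, and let $p_2$ place mass $1 - \eta$ on $z_0$ and mass $\eta$ on $z_1$. First I would evaluate the target under each. Under $p_1$ every group of positive probability consists (up to measure zero) of $z_0$ alone, so its loss equals the population loss $0$, giving $\mwld(\weq) = 0$. Under $p_2$ the singleton group $g = \1[z = z_1]$ has $\E[g] = \eta > 0$ and group loss $1$, while the population loss is $\eta$, so the discrepancy is $1 - \eta$; since no group does better, $\mwld(\weq) = 1 - \eta$. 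The two quantities thus differ by $1 - \eta$, which I can drive close to $1$.

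Next I would establish indistinguishability and run a two-point (Le Cam) argument. Because $\mathrm{TV}(p_1, p_2) = \eta$, the $n$-fold products obey $\mathrm{TV}(p_1^{\otimes n}, p_2^{\otimes n}) \le n\eta$ by subadditivity (equivalently, a sample from $p_2$ contains no copy of $z_1$, hence is distributed exactly as a sample from $p_1$, with probability at least $1 - n\eta$). Fixing $\epsilon = \delta = \tfrac14$ and taking any estimator $\widehat\mwld_n$, consider the event $A \eqdef \{\widehat\mwld_n < \half\}$. For $\eta$ small, accuracy on $p_1$ forces $\widehat\mwld_n < \epsilon < \half$ (i.e.\ $A$) and accuracy on $p_2$ forces $\widehat\mwld_n > 1 - \eta - \epsilon > \half$ (i.e.\ $A^c$), so the two error probabilities are at least $\pr_{p_1}[A^c]$ and $\pr_{p_2}[A]$. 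These sum to at least $1 - \mathrm{TV}(p_1^{\otimes n}, p_2^{\otimes n}) \ge 1 - n\eta$.

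Finally I would close by choosing $\eta$ adversarially. Given any finite sample size $n$, pick $\eta$ small enough that $n\eta < 1 - 2\delta = \half$ (and $\eta + \epsilon < \half$); then $\pr_{p_1}[\text{error}] + \pr_{p_2}[\text{error}] > 2\delta$, so at least one of the two error probabilities exceeds $\delta$. Since a valid estimator must satisfy \refeqn{estimating} for the unknown true distribution—in particular for both $p_1$ and $p_2$ using this very $(\ell, h)$—this is a contradiction, and no such estimator exists. The main obstacle is conceptual rather than computational: one must notice that $\eta$ may be taken as a function of the proposed $n$, so the distinguishing subpopulation shrinks below the resolution of any finite sample while still keeping $\mwld(\weq)$ near $1$; once this is in hand, the Le Cam bookkeeping is routine.
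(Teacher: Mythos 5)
Your proposal is correct and follows essentially the same strategy as the paper's proof: both construct two statistically indistinguishable distributions whose $\mwld(\weq)$ values differ by a constant (using the non-degeneracy points $z_0$, $z_1$ and a vanishing mass $\eta$ chosen after the sample size $n$), and both derive the contradiction from the fact that a sample from the perturbed distribution coincides with one from the unperturbed distribution with probability at least $(1-\eta)^n \ge 1 - n\eta$. Your use of explicit point masses and the Le Cam two-point bookkeeping is a minor presentational variant of the paper's coupling argument, not a different route.
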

We prove \refprop{impossibility} by constructing two statistically indistinguishable distributions such that $\mwld(\weq) \ge \half$ for one and $\mwld(\weq) < \half$ for the other (see Appendix~\ref{sec:appendix} for details).
\refprop{impossibility} is intuitive since $\mwld$ for $w(g) = 1$ is asking for uniform convergence over all measurable functions,
which is a statistical impossibility.
It therefore seems natural to shift our focus to weighting functions $w$ that decay to zero as the measure of the group goes to zero. 

As our next result, we show that we can estimate $\mwld$ for the weighting function $\wk(g) \eqdef \E [g]^k$ for $k \in (0,1]$.
In particular, we show (i) we can efficiently compute the empirical $\widehat\mwld_n(\wk)$, and (ii) it converges to $\mwld(\wk)$.
Letting $\widehat\E[\cdot]$ denote an expectation with respect to the $n$ points, define the plug-in estimator:
\begin{align}
\label{eqn:empirical_mwld}
\widehat\mwld_n(\wk) &\eqdef \max \limits_{g \in \sG} \widehat \E [g]^k  \Big | \widehat\E[\ell \mid g = 1] - \widehat\E[\ell] \Big|.
\end{align}
Although $\widehat \mwld_n(\wk)$ seems to have an intractable max,
in the next theorem we prove we can actually compute it efficiently,
and that $\widehat \mwld_n(\wk)$ converges to $\mwld(\wk)$, where the rate of convergence depends on $k$.
The key is that $\widehat \mwld_n(\wk)$ attains its max over a linear number of possible groups based on sorting by losses;
this is what enables computational efficiency as well as uniform convergence.

\begin{restatable}{theorem}{EfficientAuditor}
	\label{thm:auditor}
	For $k \in (0,1]$, let $\wk \eqdef \E [g]^k$.
	Given $n$ i.i.d. sample from $p^\star$,
	we can compute $\widehat\mwld (\wk)$\refeqn{empirical_mwld} efficiently in $\sO(n \log n)$ time; and 
	for any parameters $\delta, \epsilon \in (0, \half)$, for a constant $C$, if $n\ge\frac{C\log (1/\delta)}{\epsilon^{2+\frac{2}{k}}}$, the following holds:
$\pr \pb{\pab{\mwld(\wk) - \widehat{\mwld}_n(\wk)} \ge \epsilon} \le \delta$.
\end{restatable}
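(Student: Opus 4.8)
The plan is to handle the two claims---the $\sO(n\log n)$ computation and the $\epsilon$-accurate convergence---through a single structural observation: for any fixed group \emph{measure} $\alpha=\E[g]$, the quantity $\pab{\E[\ell\mid g=1]-\E[\ell]}$ is maximized by a \emph{threshold group} of the form $\{z:\ell(h,z)\ge t\}$ or $\{z:\ell(h,z)\le t\}$. This is a rearrangement (Neyman--Pearson-type) argument: since $\E[\ell\mid g=1]=\E[\ell g]/\alpha$ and the weight $\alpha^k$ depends on $g$ only through $\alpha$, to push the conditional loss above (resp. below) the population mean one should place the group on the highest- (resp. lowest-) loss region of mass $\alpha$. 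Both $\mwld(\wk)$ and its plug-in estimator therefore reduce from a supremum over all measurable groups to a supremum over a one-parameter family of loss thresholds.

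For the computational claim, I would first sort the $n$ losses in $\sO(n\log n)$ time. The reduction above shows that, among empirical groups of a fixed size $m$, only the top-$m$ and bottom-$m$ groups (by loss) can attain the max in \refeqn{empirical_mwld}; hence $\widehat\mwld_n(\wk)$ is a maximum over only $2n$ candidate groups. Precomputing prefix sums of the sorted losses lets me evaluate each candidate's weight $(m/n)^k$ and conditional-mean deviation in $\sO(1)$, so the sweep over all $m$ costs $\sO(n)$. The dominant cost is the sort, giving $\sO(n\log n)$.

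For convergence, I would control everything through the empirical distribution function of the scalar $\ell(h,z)\in[0,1]$. By the DKW inequality, $\sup_t\pab{\widehat F(t)-F(t)}\le\Delta$ with probability at least $1-\delta$, where $F(t)=\pr[\ell\ge t]$ and $\Delta=\sO(\sqrt{\log(1/\delta)/n})$. Using the identity $H(t)\eqdef\E[\ell\,\1[\ell\ge t]]=tF(t)+\int_t^1 F(s)\,ds$, the same event controls the partial-expectation functional $\sup_t\pab{\widehat H(t)-H(t)}\le\Delta$ and the mean $\pab{\widehat\E[\ell]-\E[\ell]}\le\Delta$. I then split the one-parameter family at a cutoff $\alpha_0$. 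For \emph{small} threshold groups (measure below $\alpha_0$), the weight alone bounds the objective by $\alpha_0^k$ in both the population and the empirical problem (since $\pab{\E[\ell\mid g=1]-\E[\ell]}\le1$), so they can be ignored up to $\alpha_0^k$. For \emph{large} groups (measure at least $\alpha_0$), writing the objective as $F(t)^{k-1}\pab{H(t)-\E[\ell]F(t)}$ and propagating the three uniform deviations---while using $\widehat F\ge\alpha_0/2$ (valid once $\alpha_0\gtrsim\Delta$) to tame the factor $F^{k-1}$---bounds the per-threshold discrepancy uniformly. Combining, $\pab{\mwld(\wk)-\widehat\mwld_n(\wk)}$ is at most the sum of a large-group term and the small-group term $\alpha_0^k$; optimizing $\alpha_0$ yields the stated polynomial sample complexity, with the exponent blowing up as $k\to0$ (consistent with the impossibility at $k=0$ in \refprop{impossibility}).

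I expect the main obstacle to be the large-group estimate, specifically controlling the weight factor $F(t)^{k-1}$, which diverges as the threshold approaches the cutoff when $k<1$. A naive bound on $\pab{F^{k-1}-\widehat F^{k-1}}$ via the mean value theorem introduces a factor $\alpha_0^{k-2}$ that is too lossy; the delicate step is to pair it with the bound $\pab{H(t)-\E[\ell]F(t)}\le F(t)$ and the fact that $\widehat F$ stays within a constant factor of $F$ above the cutoff, so that the divergent factor is absorbed into $F(t)^{k-1}\le\alpha_0^{k-1}$. Getting this pairing right, together with choosing $\alpha_0$ to balance it against $\alpha_0^k$, is what determines the final $k$-dependent rate; the remaining steps---the rearrangement reduction, the DKW bound, and the elementary fact that the gap between two suprema is bounded by the supremum of the pointwise gap (applied separately in each regime)---are routine.
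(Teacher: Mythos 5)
Your overall strategy---sort, reduce to a one-parameter family of loss-threshold groups, control that family uniformly via DKW, and trade off a per-threshold deviation against the trivial $\alpha_0^k$ bound for small groups---is the same skeleton as the paper's proof, and your error propagation is sound (in fact, by keeping the factor $F(t)^{k-1}$ attached to the deviation of $H(t)-\E[\ell]F(t)$ rather than bounding the weight and the conditional-mean deviation separately, you would get a slightly sharper rate than the paper's, which still implies the stated sample complexity). The one genuine gap is the opening reduction. The claim that ``for any fixed group measure $\alpha$ the discrepancy is maximized by a threshold group'' is false when the loss distribution has atoms (population side) or the sample has tied losses (empirical side, e.g.\ zero-one loss): for $\alpha$ strictly inside an atom, the mass-$\alpha$ maximizer of $\E[\ell g]$ necessarily splits the atom and is not of the form $\1[\ell\ge t]$, so the supremum over all groups is not obviously equal to the supremum over your family parameterized by $t$, and the inequality $\pab{\sup D-\sup\widehat D}\le\sup_t\pab{D(g_t)-\widehat D(g_t)}$ does not yet apply. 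To close this you must show that, as $\alpha$ sweeps across an atom (where the ``top-$\alpha$'' value function $H^*(\alpha)$ is affine, $H^*(\alpha)=a\alpha+b$), the objective $\alpha^{k-1}\pab{H^*(\alpha)-\E[\ell]\alpha}$ is maximized at an endpoint. This is exactly the second half of the paper's \reflem{computing_U_efficiently}: the unimodality analysis of $f(x)=x^{k-1}\pab{ax+b}$ via its derivative, and it is the \emph{only} place the hypothesis $k\le 1$ is used---so it is not a routine step and needs a proof. (For the pure $\sO(n\log n)$ algorithmic claim the gap is harmless, since sweeping prefixes/suffixes of the sorted losses computes $\max_m\max_{\widehat\E[g]=m/n}\widehat D(g)$ correctly even with ties; the issue only bites in the uniform-convergence step.)

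A secondary, structural remark: the paper avoids ever needing the threshold reduction on the \emph{population} side. It splits the two directions asymmetrically, bounding $\mwld(\wk)-\widehat\mwld_n(\wk)\le D(g^\star)-\widehat D(g^\star)$ by pointwise (Hoeffding-type) concentration at the fixed population near-maximizer $g^\star$ (which needs no uniformity and gives the rate $(\log(1/\delta)/n)^{k/(2k+1)}$), and only uses the threshold family plus DKW for the other direction $\widehat D(\widehat g)-D(\widehat g)$, where \reflem{computing_U_efficiently} guarantees $\widehat g$ lies in that family. If you adopt that asymmetric split you only need the tie argument for the empirical sample, not for an arbitrary population loss distribution; either way, the endpoint/unimodality lemma is the missing ingredient you should supply.
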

\begin{sproof}
For computational efficiency, we show that if we sort the $n$ points by their losses $\ell_1\ge \ell_2\ge \dots\ge \ell_n$ (in $\sO(n \log n)$ time),
  then there exists an index $t$ such that either $g = \{\ell_1, \dots, \ell_t\}$ or $g = \{\ell_{t+1}, \dots, \ell_n\}$ achieves the empirical maximum weighted loss discrepancy \refeqn{empirical_mwld}.

To show convergence, let $D(g)$ be the weighted loss discrepancy for group $g$, $D(g) \eqdef \E [g]^k \pab {\E [\ell \mid g=1]-\E [\ell]}$, and analogously let $\widehat D(g)$ be its empirical counterpart.
	We first prove for any $\alpha > 0$ if $\E [g] \ge \alpha$,  then $D(g) - \widehat D(g) \le C\sqrt {\frac{\log (1/\delta)}{n\alpha}}$ with probability at least $1-\delta$, for some constant $C$.
	Furthermore, since we assumed $\ell \le 1$, then $D(g) - \widehat D(g) \le \alpha^k$.
  By combining these two upper bounds ($\max_\alpha \min (\text {bound 1}, \text {bound 2})$), we compute an upper bound independent of $\alpha$, thereby applicable to all groups. 

  To show uniform convergence, from the sorting result, we only need to consider groups of the form $\1[\ell(h, z) \ge u]$ and their counterparts $\1[\ell(h, z) \le u]$. We prove uniform convergence over this set using the KKW inequality \citep{massart1990tight} and the same procedure we explained before for convergence. See Appendix~\ref{sec:appendix} for the complete proof.

\end{sproof}

%

\section{A Closer Look at $w(g)=\E [g]^k$ and Connection to Loss Variance}
\label{sec:weighting_meaning}

As shown in \refsec{mwld}, \mwld{} has two different interpretations: group fairness and distributional shift interpretations.
In this section, we look at these interpretations for the family of weighting functions, $\wk(g) = \E[g]^k$, for which we can efficiently estimate \mwld(\wk).
In \refsec{loss_variance}, we show a connection between a particular member of this family ($k = 1/2$) and \emph{loss variance}.
As an extension, in \refsec{fairness_CLV}, we introduce \emph{coarse loss variance}, a simple modification of loss variance which measures weighted loss discrepancy only for sensitive groups.


From the group fairness interpretation \refeqn{guarantee}, $\mwld(\wk)$ provides guarantees on the loss discrepancy of each group according to its size. Therefore groups with similar sizes have similar guarantees. For a fixed value of $\mwld(\wk)$ (here $0.1$), \reffig{different_k_guarantees} (left) shows the bounds on the group loss discrepancy for different sizes and $k$.
The upper bound guarantee for smaller groups is weaker, and the parameter $k$ governs how much this upper bound varies across group sizes.

From the distributional shift interpretation \refeqn{demographic_shift_guarantee}, $\mwld(\wk)$ provides a guarantee on the loss of a new distribution where the weight of $g$ is increased by a maximum factor of $\E[g]^k$. \reffig{different_k_guarantees} (right) illustrates this maximum upweighting factor $\E[g]^k$.  
The upweighting factor for smaller groups is smaller, and the parameter $k$ governs how much this factor varies across group sizes.

\begin{figure}[t]
	\centering
	\begin{minipage}[t]{0.64\textwidth}
		\centering
		\includegraphics[width=0.8\textwidth]{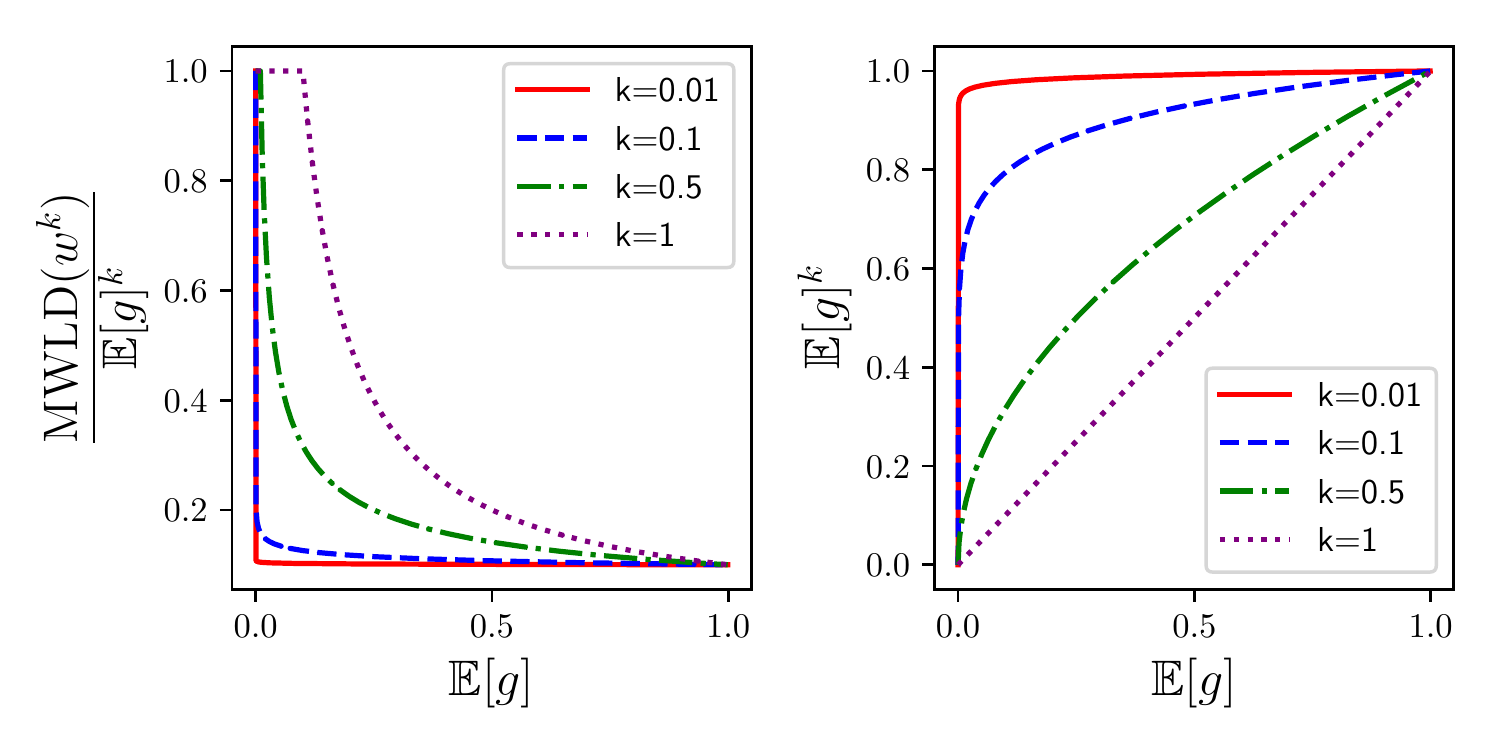}
			\caption{\label{fig:different_k_guarantees}
				left: Upper bound guarantee for loss discrepancy of a group for different values of $k$.
				right: Magnitude of the shift in a group is dictated by the weighting function of the group.
        }
	\end{minipage}\quad%
		\begin{minipage}[t]{0.32\textwidth}
			\centering
			\includegraphics[width=0.8\textwidth]{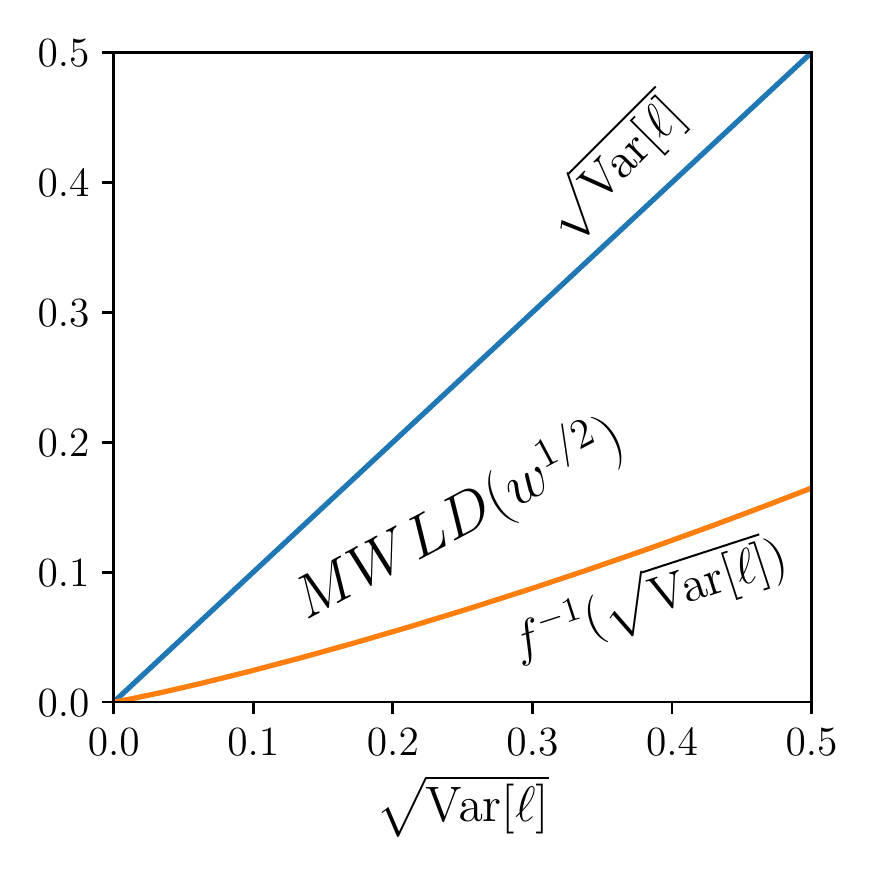}
			\caption{\label{fig:upper_lower_bounds}
				Relationship between $\var [\ell]$ and  $\mwld(\ws)$. Here $f(x) \eqdef x\sqrt{2-4\ln (x)}$.}
		\end{minipage}
\end{figure}

\subsection{Loss Variance and Maximum Weighted Loss Discrepancy}
\label{sec:loss_variance}
In this section, we show an interesting connection between a particular member of the introduced family of weighting function, $\ws(g) = \E[g]^{1/2}$,
and loss variance, which appears prominently in generalization bounds \citep{maurer2009empirical}.
Loss variance, $\var[\ell]$, 
is the average squared difference between the loss of individuals and the population loss:
\begin{align}
\label{eqn:averageIndividual}
  \var[\ell] =\E \pb{\p{\ell - \E [\ell]}^2}.
  \end{align}
From the law of total variance, we have $\var[\ell] \ge \var[\E[\ell \mid g]]$ for any group $g$.
By observing that $\var[\E[\ell \mid g]] \geq \E [g] \p{\E [\ell \mid g=1] - \E [\ell]}^2$,
we see that square root of loss variance is an upper bound on $\mwld(\ws)$.
This allows us to bound the loss of any group in terms of the loss variance (using \refeqn{guarantee}).
A natural next question is about the tightness of the upper bound. 
How much larger can the variance
be compared to the $\mwld(\ws)$?
The next proposition shows that loss variance also provides a lower bound on a function of the $\mwld(\ws)$. 
\begin{restatable}{proposition}{propGroupToVar}
	\label{prop:groupToVar}
	For any measurable loss function $0 \le \ell \le 1$, the following holds: $\mwld(\ws) \le \sqrt{\var[\ell]} \le \mwld(\ws)\sqrt{2-4\ln\left({\mwld(\ws)}\right)}$.
\end{restatable}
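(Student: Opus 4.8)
The lower bound $\mwld(\ws) \le \sqrt{\var[\ell]}$ is already established in the discussion preceding the statement (apply the law of total variance to the binary variable $g$ and keep only the mass on $\{g=1\}$), so the plan is to concentrate on the upper bound. Write $M \eqdef \mwld(\ws)$ and $\mu \eqdef \E[\ell]$. The core idea is that the $\mwld(\ws)$ constraint is exactly strong enough to yield a Chebyshev-type tail bound on $\ell$, which can then be integrated to control the variance.

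First I would derive the tail bound by testing $\mwld$ against threshold groups. For $s > 0$, take $g = \1[\ell \ge \mu + s]$; since $\E[\ell \mid \ell \ge \mu+s] - \mu \ge s$, the defining inequality $\E[g]^{1/2}\pab{\E[\ell\mid g=1]-\mu} \le M$ gives $\pr[\ell \ge \mu+s]^{1/2}\, s \le M$, i.e. $\pr[\ell \ge \mu + s] \le M^2/s^2$. The symmetric choice $g = \1[\ell \le \mu - s]$ gives $\pr[\ell \le \mu - s] \le M^2/s^2$. Combining the two tails with the trivial probability bound yields $\pr[\pab{\ell - \mu} \ge s] \le \min(1,\, 2M^2/s^2)$ for all $s > 0$.

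Next I would rewrite the variance as a tail integral, $\var[\ell] = 2\int_0^\infty s\,\pr[\pab{\ell-\mu}\ge s]\,ds$, and substitute the bound. The one nontrivial point is that the tail estimate $2M^2/s^2$ makes the integral diverge logarithmically at infinity; this is resolved by the boundedness assumption $0\le\ell\le1$, which forces $\pab{\ell-\mu}\le 1$ and hence truncates the integral at $s = 1$. Splitting at the crossover $s_0 = M\sqrt{2}$ (where $2M^2/s_0^2 = 1$), I use the constant bound on $[0, s_0]$ and the $2M^2/s^2$ bound on $[s_0, 1]$. This split requires $s_0 \le 1$, which holds because $M \le \sqrt{\var[\ell]} \le \half$ for a $[0,1]$-valued loss, so $s_0 = M\sqrt{2} \le 1/\sqrt{2} < 1$.

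The remaining computation is routine: $2\int_0^{s_0} s\,ds = 2M^2$ and $2\int_{s_0}^1 (2M^2/s)\,ds = -4M^2\ln(M\sqrt{2})$, which sum to $M^2\p{2 - 4\ln M - 2\ln 2} \le M^2\p{2 - 4\ln M}$, giving $\sqrt{\var[\ell]} \le M\sqrt{2 - 4\ln M}$ as claimed. The main obstacle is really the conceptual step of recognizing that threshold groups convert $\mwld(\ws)$ into a Chebyshev tail bound, together with noticing that the tail integral converges only because of the boundedness of $\ell$; once these are in place the estimate falls out. The degenerate case $M = 0$ forces $\ell \equiv \mu$ almost surely and hence $\var[\ell] = 0$, so the bound holds trivially there as well.
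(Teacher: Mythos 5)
Your proposal is correct and follows essentially the same route as the paper's proof: both convert $\mwld(\ws)$ into a Chebyshev-type tail bound $\pr[\pab{\ell-\mu}\ge s]\le \mwld(\ws)^2/s^2$ via threshold groups, rewrite the variance as a tail integral (integration by parts / layer-cake), and split the integral at the crossover point, with the lower bound coming from the law of total variance exactly as in the paper. The only differences are cosmetic — you treat the two tails jointly and split at $s_0=M\sqrt{2}$ rather than at $M$, which yields the marginally sharper constant $2-2\ln 2-4\ln M$ before relaxing to the stated $2-4\ln M$.
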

\begin{sproof}
%
We first center the losses and make the average loss $0$ (without changing the MWLD).
For any $u>0$, let $g_u$ be the group of points with loss greater than $u$.
By definition of $\mwld$, we have $\sqrt{\E [g_u]} u \le \sqrt{\E [g_u]} \E [\ell \mid g_u=1] \le \mwld(\ws)$.
Therefore we have $\pr [\ell \ge u] = \E [g_u] \le \frac{\mwld(\ws)^2}{u^2}$.
Using integration by parts, we express variance with an integral expression in term of cumulative density function (CDF).
Plugging this bound into the integral expression yields the result.  
For more details, see Appendix~\ref{sec:appendix}.
\end{sproof}

\reffig{upper_lower_bounds} shows the bounds on $\mwld(\ws)$ for different values of $\sqrt{\var [\ell]}$.
This proposition establishes a connection between statistical generalization and \mwld{} (and thereby group fairness).
Furthermore, this connections states that reducing loss discrepancy between individuals and the population \refeqn{averageIndividual} leads to lower loss discrepancy between groups and the population \refeqn{mwld} and vice versa.



\begin{SCfigure}	\includegraphics[width=0.55\textwidth]{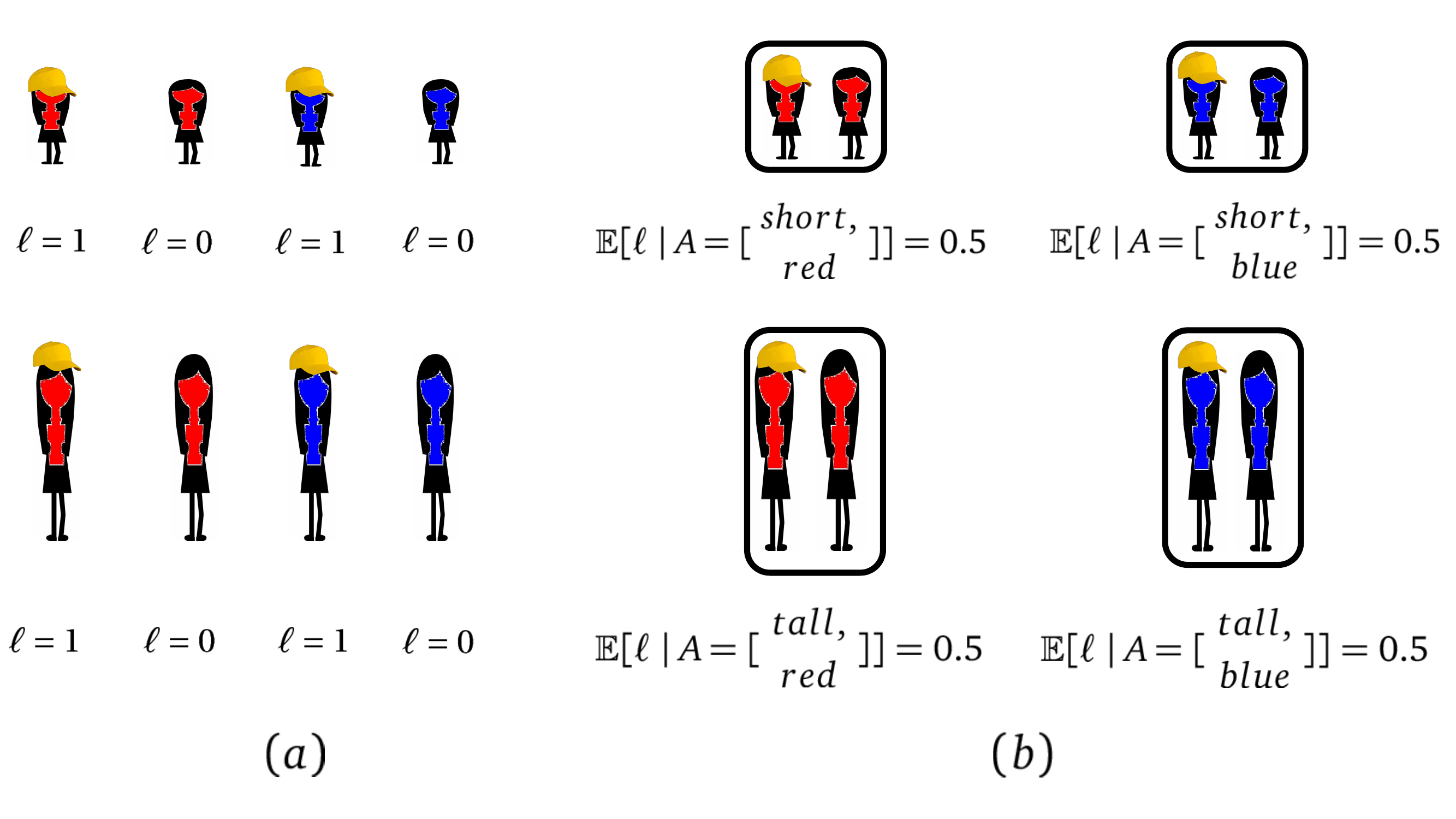}
	\caption{
		\label{fig:LV_vs_CLV}
		Each individual has two sensitive attributes, color and height, and one non-sensitive attribute, having a hat,
		(a) $\mwld(\ws) \le \sqrt{\var [\ell ]}= 0.5$.
		(b) $\mwld(\wsa) \le \var [\E [\ell \mid A]] = 0$. 
    Note that, since the weights of the groups not defined on sensitive attributes are $0$, their expected loss can deviate a lot from average loss (e.g, the expected loss of individuals with hats is $1$, which deviates a lot from average loss $\E [\ell]=0.5$).}
\end{SCfigure}

\subsection{Sensitive Attributes and Coarse Loss Variance}
\label{sec:fairness_CLV}
So far, we have focused on the loss discrepancy over all groups, which could be too demanding.
Suppose we are given a set of sensitive attributes (e.g., race, gender),
and we are interested only in groups defined in terms of those attributes.
We define \emph{coarse loss variance}, which first averages the losses of all individuals with the same sensitive attribute values and
considers the variance of these average losses.
Formally, let $A$ denote the sensitive attributes; for example, $A = [\text{race}, \text{gender}, \dots]$.
Then the coarse loss variance is:
\begin{align}
\label{eqn:coarseLossVariance}
\var \left[ \E[\ell \mid A ] \right] &= \E \left[\p{\E[\ell \mid A] - \E[ \ell] }^2 \right].
\end{align}

Coarse loss variance is  smaller than loss variance \refeqn{averageIndividual} because it ignores fluctuations in the losses of individuals who have identical sensitive attributes.
\reffig{LV_vs_CLV} shows the difference between loss variance and coarse loss variance. 
%
Analogous to \refprop{groupToVar} in previous section, we show that coarse loss variance is a close estimate of $\mwld(w)$ where $w(g)=\sqrt{\E [g]}$ if $g$ is a function only of sensitive attributes and $0$ otherwise.
Define $\sG_A$ to be the set of groups $g$ such that $g(z)$ only depends on the sensitive attributes $A(z)$.
Let $\wsa (g) \eqdef \1 [g \in \sG_A]\sqrt{\E [g]}$, then we have:
\begin{align}
\label{eqn:coarseUnfairness}
\mwld(\wsa) \leq \sqrt{\var[\E[\ell\mid A]]} \le \mwld(\wsa)\sqrt{2-4\ln\left({\mwld(\wsa)}\right)}
\end{align}
For the formal propositions regarding coarse loss variance, see \refcor{protectedGroup} in Appendix~\ref{sec:appendix}.
As a caveat, empirical coarse loss variance converges slower to its population
counterpart in comparison to loss variance (See \refthm{coarse_variance_generalizes} in Appendix~\ref{sec:generalization_bounds} for the exact convergence rate).




\begin{remark}
	\label{remark:loss_discussion}
  In some applications, the impact of misclassification could be quite different depending on the true label of the individual. 
  For example, in deciding whether to provide loans, denying an eligible individual a loan could have a greater negative 
  impact than providing a loan to a defaulter.
  In such situations, we consider the variance conditioned on the label, i.e., $\E[\var[\E [\ell \mid y,A]\mid y]]$, so that we do not attempt to pull the losses of individuals with different labels (and consequently different
  impacts of misclassification) together.
\end{remark}
 %

\section{Experiments}
\label{sec:experiments}
We first explore the effect of parameter $k$ in $\mwld(\wk)$, as discussed in \refsec{weighting_meaning}.
We then use (coarse) loss variance to train models and show that we can halve the loss variance without significant increase in average loss.
\reftab{dataset_statistics} shows a summary of the datasets considered. 
For more details about these datasets, see Appendix~\ref{app:dataset_appendix}.

\begin{table*}
	\centering
	\scalebox{0.7}{
	\hspace{-0.5cm}
	\begin{tabular}{lrrlll}
	\toprule
    \bf Name & \bf \# Records & \bf \# Attributes & \bf Variable to be predicted & \bf Sensitive attributes & \bf Other attributes \\ \midrule	
	\bf C\&C &1994&99&High or low crime rate& Different races percentage  (discretized)& Police budget, \#Homeless, \dots\\
	\bf Income & 48842 & 14 &High or low salary &Race, Age (discretized), Gender&Martial status, Occupation, \dots\\
	\bf German & 1000 & 22 & Good or bad credit rating & Age(discretized), Gender& Foreign worker, Credit history, \dots\\
	\bf COMPAS\_5 &7214 &5&Recidivated or not&Race, Age (discretized), Gender& Prior counts, Charge degree, \dots\\ \bottomrule
	\end{tabular}}
\caption{\label{tab:dataset_statistics}Statistics of datasets. For more details about these datasets, see Appendix~\ref{app:dataset_appendix}.}
\end{table*}

\subsection{Estimating Maximum Weighted Loss Discrepancy}
We first fit a logistic regression (LR) predictor on these datasets, with the following objective: $\sO_{\text{LR}} \eqdef \widehat\E[\ell] + \eta \|\theta\|_2^2$. 
\reffig{different_k_auditing}(a) shows the values of $\mwld(\wk)$ for different value of $k$.
As shown in \refthm{auditor} we expect $\mwld(\wk)$ to converge slower to the population for smaller $k$.
Empirically, we observe a bigger train-test gap for $\mwld(\wk)$ for smaller $k$.

As discussed in \refsec{weighting_meaning}, according to the group fairness interpretation of $\mwld(\wk)$, we can bound the loss of any group in term of $\mwld(\wk)$; 
where small $k$ leads to similar upper bound for all groups, while larger $k$ allows weaker upper bounds for smaller groups.
For each $\alpha$, we compute the maximum loss discrepancy for groups with size $\alpha$ in COMPAS\_5 dataset (i.e., $\sup_{g: \E [g]=\alpha} \pab{\E [\ell \mid g=1] - \E [\ell]}$). The solid black line in \reffig{different_k_auditing}(b) shows this plot.
For different values of $k$, we plot the obtained upper bound from $\mwld(\wk)$\refeqn{guarantee}.
Smaller $k$ leads to tighter upper bound for small groups and large $k$ leads to tighter upper bound for large groups. 
%

\begin{figure}
	\centering
			\begin{minipage}[t]{0.72\textwidth}
					\centering
	\includegraphics[height=110pt]{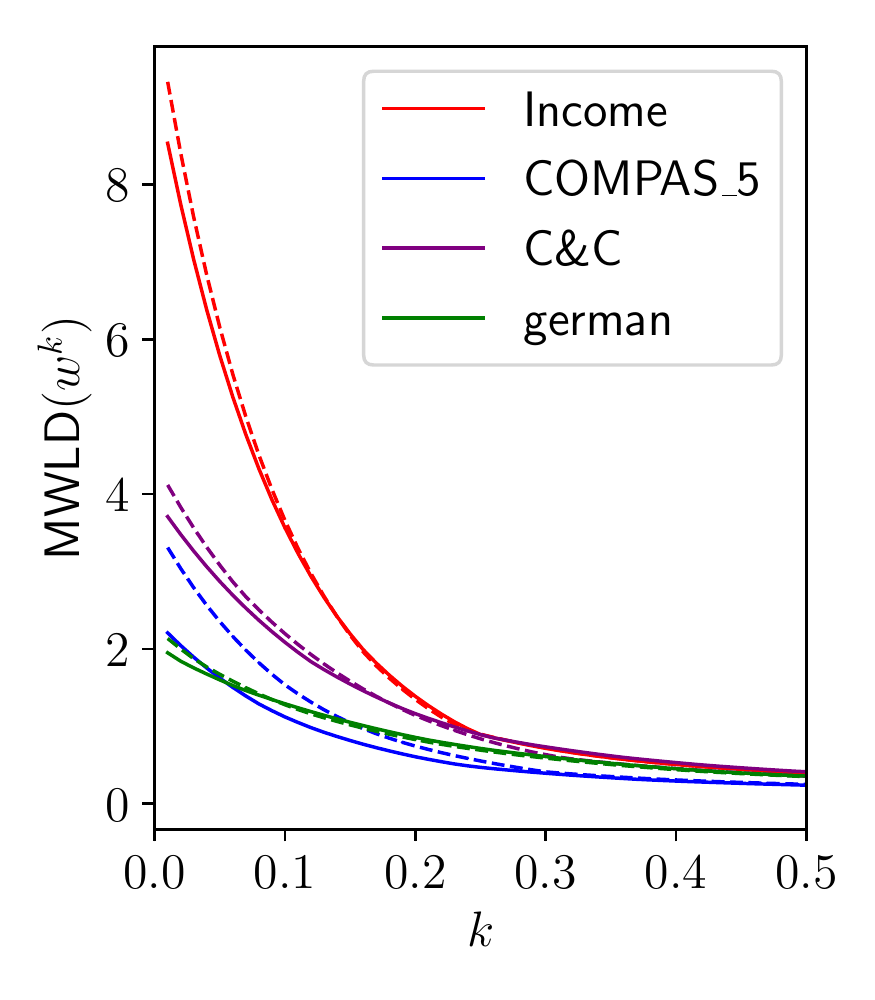}\hspace{0.1\textwidth}%
		\includegraphics[height=110pt]{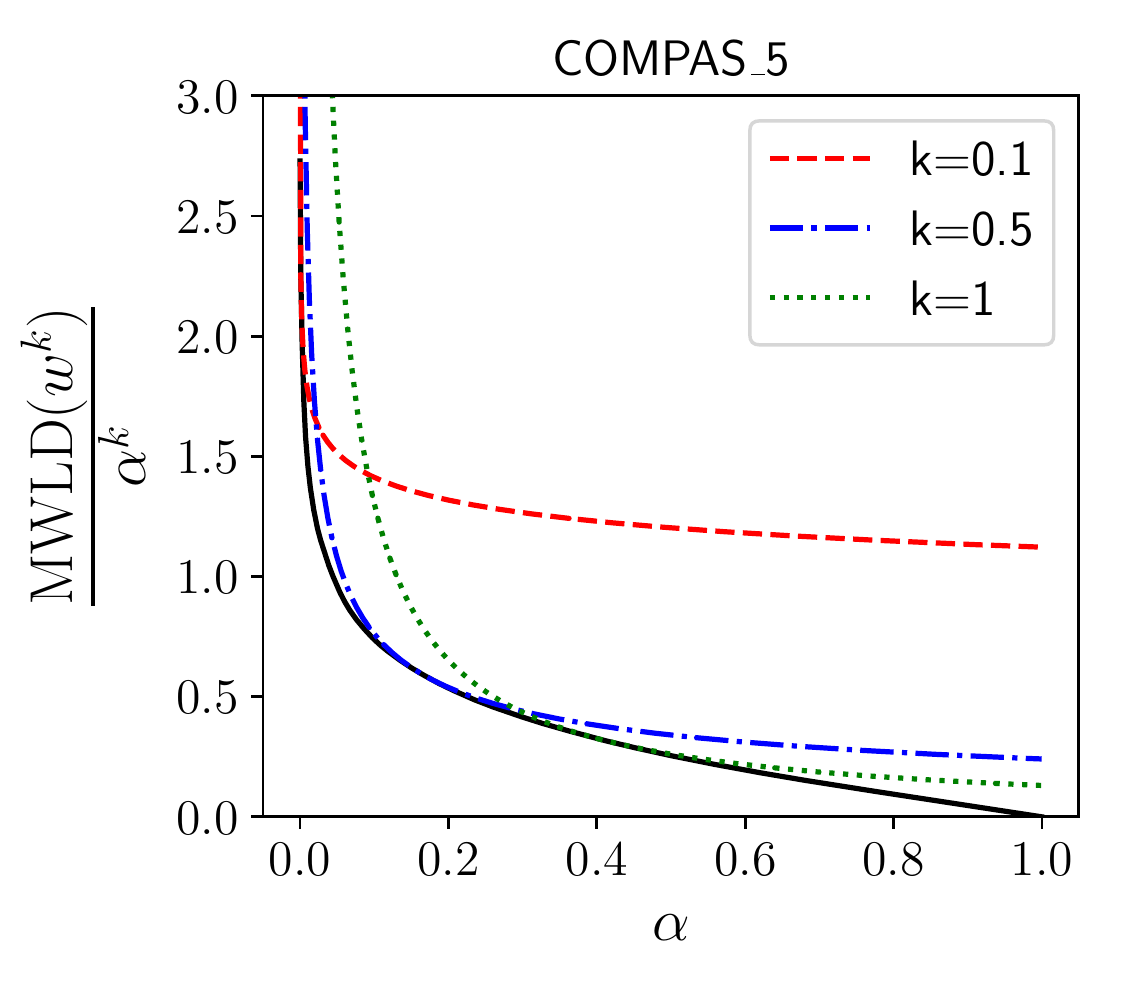}
	
\vspace{-0.2cm}
\tiny\hspace{0.01\textwidth}(a)\hspace{0.45\textwidth}(b)
				\caption{ \label{fig:different_k_auditing}
					(a) The gap between values of $\mwld(\wk)$ in train (dashed lines) and test (solid lines) is larger for smaller $k$. 
          (b) The solid black line indicates the maximum loss discrepancy for different group sizes. 
					Dashed lines show the obtained upper bound from $\mwld(\wk)$ 
					 \refeqn{guarantee}.
					 The upper bound is tighter for smaller groups when $k$ is small, and it is tighter for larger groups when $k$ is large.
	}
	\end{minipage}\quad%
\begin{minipage}[t]{0.23\textwidth}

\hspace{-0.8cm}\includegraphics[height=110pt]{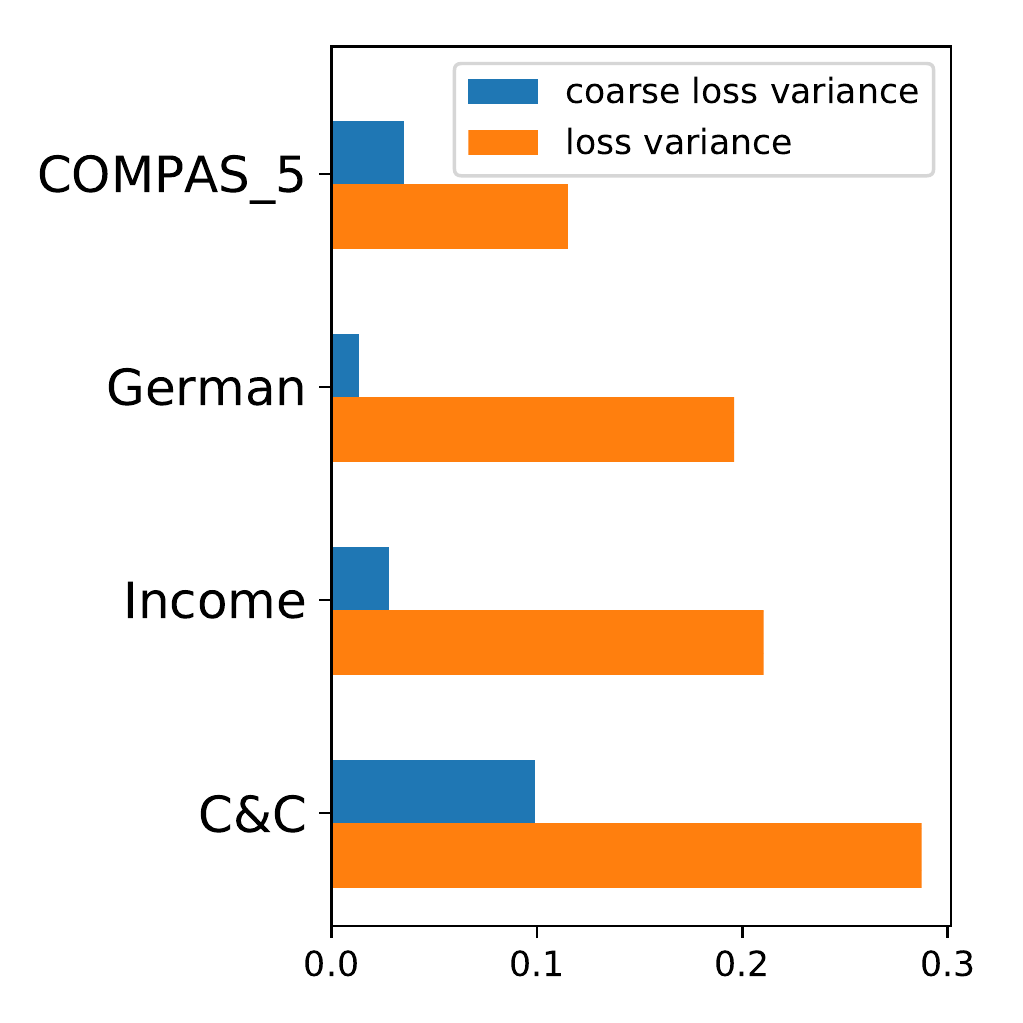}

\vspace{-0.2cm}
\tiny\hspace{0.01\textwidth}
\caption{\label{fig:LV_and_CLV} 
A logistic regression with L2-regularizer (\lr) leads to high (coarse) loss variance in all datasets.
}
\end{minipage}
\end{figure}

%

\subsection{Loss Variance Regularization}
\label{sec:loss_variance_experiments}
Recall that loss variance has
three different interpretations. 
1) It is a lower bound for maximum weighted loss discrepancy (\refprop{groupToVar});
2) It measures the average loss discrepancy between individuals and the population \refeqn{averageIndividual}; and
3) It is a regularizer to improve test error.
In this section, we study regularizing loss variance and all three aspects.
In all datasets that we consider, the effect of misclassification depends on the label.
As explained in Remark~\ref{remark:loss_discussion}, in order to not attempt to pull together the losses of individuals with different
labels, we use loss variance and coarse loss variance conditioned on the label.
Formally, we define two objectives based on loss variance and coarse loss variance as follows:
\begin{tabular}{p{0.4\textwidth}lp{0.5\textwidth}}
	{\begin{align}
	\label{eqn:\lv}
\sO_{\text{LV}} \eqdef \sO_{\text{LR}} + \lambda \widehat\E [\widehat\var [\ell\mid y]]
	\end{align}}&&
	{\begin{align}
\label{eqn:\clv}
\sO_{\text{CLV}} \eqdef \sO_{\text{LR}} + \lambda \widehat \E [\widehat\var [\widehat \E [\ell\mid A,y]\mid y]]
	\end{align}}\\
\end{tabular}
We optimize the objectives above using stochastic gradient descent.
We use a logistic regression model for prediction. In all variance computations, $\ell$ is the log loss.

\begin{figure}[t]
	\centering
	\begin{subfigure}[b]{0.25\textwidth}
		\includegraphics[width=\textwidth]{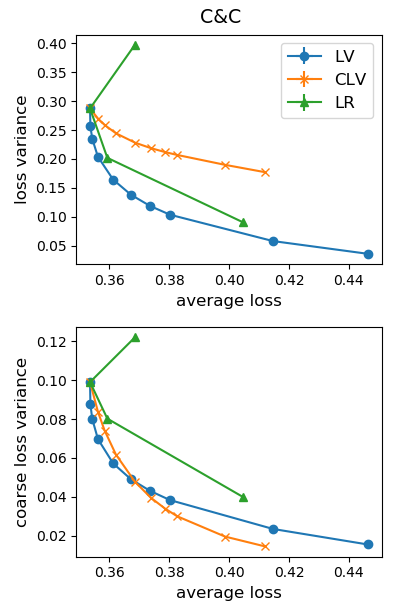}
	\end{subfigure}%
	\begin{subfigure}[b]{0.25\textwidth}
		\includegraphics[width=\textwidth]{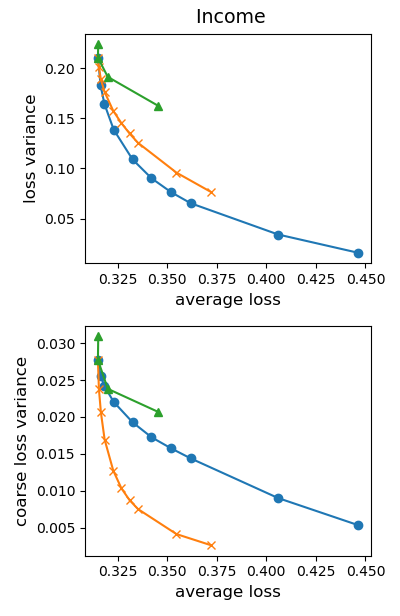}
	\end{subfigure}%
	\begin{subfigure}[b]{0.25\textwidth}
		\includegraphics[width=\textwidth]{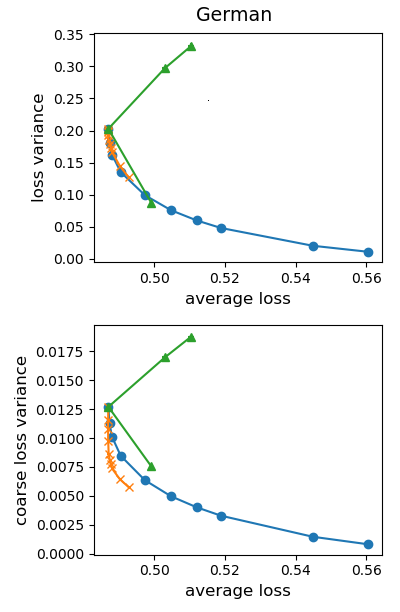}
	\end{subfigure}%
	\begin{subfigure}[b]{0.25\textwidth}
		\includegraphics[width=\textwidth]{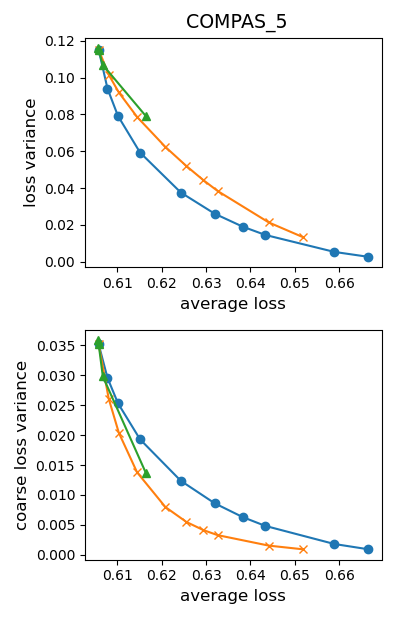}
	\end{subfigure}
	\caption{\label{fig:showing_efficiency}  
		First row: \lv{} halves the loss variance by only increasing loss 2-3\%.
		Second row: \clv{} halves the coarse loss variance by increasing the loss around 1-2\%.
	}
\end{figure}

\label{sec:results}
\begin{figure}[t]
			\centering
	\begin{minipage}{0.4\textwidth}
	\centering
	\includegraphics[width=0.8\textwidth]{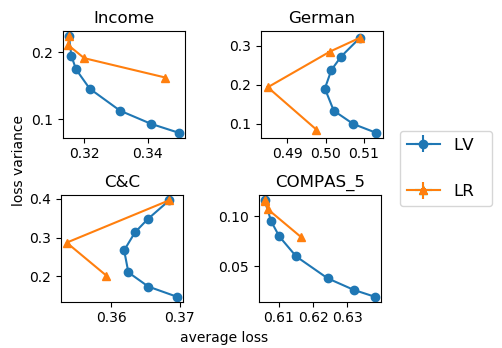}
	\captionof{figure}{\label{fig:showing_regularization} 
		\lv{} reduces loss variance and average loss simultaneously. }
	\end{minipage}\hfill%
	\begin{minipage}{0.5\textwidth}
	\centering
	\scalebox{0.85}{
		\begin{tabular}{llll}
			\toprule
			&\bf Acc. & \bf D\textsubscript{FPR}& \bf D\textsubscript{FNR} \\ \midrule
			Unconstrained &0.668&0.18&-0.30\\
			\citet{zafar2017fairness} & 0.661 & 0.03 & -0.11\\
			\citet{hardt2016} & 0.645 & -0.01 & -0.01 \\
			\clv{} ($\lambda=1.8$) &0.661 & 0.02 & -0.10\\ 
			\clv{} ($\lambda=9$) &0.656 & -0.04 & -0.04\\ \bottomrule
		\end{tabular}}
		\captionof{table}{\label{tab:group_comparison_compas_5} Comparison between different methods. D\textsubscript{FPR} (D\textsubscript{FNR}) denote the difference between False positive (False Negative) rate of white individuals and black individuals. }
	\end{minipage}
\end{figure}

\reffig{LV_and_CLV} shows that without any regularization ($\lambda = 0$, the \lr{} objective), both loss variance and coarse loss variance are large on all four datasets.
This suggests \lr{} predictions have high loss discrepancy both for groups and individuals.
Note that these two notions are incomparable across datasets---smaller loss variance does not imply smaller coarse loss variance and vice versa. 
 
Let's now evaluate the training procedures we proposed to learn a predictor with lower loss discrepancy for groups and individuals.
By varying the regularization parameter $\lambda$ in \refeqn{\lv}, we visualize the trade-off between loss variances and average loss for \lv{}.
As shown in \reffig{showing_efficiency} (first row), \lv{} halves the loss variance by increasing average loss by only 2--3\%.
Similar result is shown in the \reffig{showing_efficiency} (second row) for \clv{} \refeqn{\clv}. 
However, since
the notion of coarse loss variance allows for fluctuations in predictions across individuals with same sensitive attributes (as opposed to loss variance),
the \clv{} is able to achieve a smaller increase in average loss (1--2\%).
As a baseline, we show the trade-off curve of \lr{} by varying L2-regularizer, $\eta$.
Now we compare \lv{} and \clv{} together; in particular, we are interested in effect of \lv{} in reducing coarse loss variance.
Interestingly, in C\&C dataset \lv{} has a better trade-off curve than \clv{} in the test distribution for small value of $\lambda$.
In German dataset, unlike \clv{}, \lv{} reduced the coarse loss variance substantially in the test time.
These two observations, suggests that sometimes \lv{} might generalize to the test set better than \clv{} (as we mentioned in \refsec{fairness_CLV}).

As we discussed in \refsec{loss_variance}, loss variance has been studied as a way to improve the generalization error of a predictor.
As shown in \reffig{showing_regularization}, \lv{} reduces loss variance and loss simultaneously in German and C\&C datasets for smaller value of regularization strength on L2 \refeqn{\lv}. 
In COMPAS\_5 and Income dataset, since there are few attributes and many data points, 
neither \lv{} or \lr{} improved the loss in these two datasets.

We now shift our focus from predicting a distribution over $\sY$
to classification where the goal is to predict the label of an individual.      
We classify individual $x$ to the class $1$ if the predictor's estimate of $\pr(y=1 \mid x) > 0.5$ and $0$ otherwise.
Our approach is mainly different from previous work \citep{zafar2017fairness, hardt2016} 
as its goal is to protect all groups formed from all combinations of sensitive attributes as opposed to treating each sensitive attribute individually.
However, we compare our model and show that 
loss variance regularization reduces the maximum loss discrepancy comparable to previous work.
We pre-process COMPAS\_5 in a similar fashion to \citet{zafar2017fairness} and we compare our model to their model and \citet{hardt2016}.
We compute the difference between the false positives of blacks and whites (\dfp) and similarly the difference between false negatives of blacks and whites (\dfn).
As shown in \reftab{group_comparison_compas_5}, compared to \citet{zafar2017fairness}, our method reached lower \dfp{} and \dfn{}, even when we choose a point with same accuracy, our method still has lower \dfp{} and \dfn{}.
Compared to \citet{hardt2016}, we obtain higher accuracy but worse \dfp{} and \dfn{}. 

\section{Related work}
\label{sec:related_work}
{\bf Algorithmic fairness.}
The issue of algorithmic fairness has risen in prominence with increased prevalence of prediction \citep{barocas2016}. 
Group fairness notions which ask for some approximate parity among some predefined groups are very prevalent in fairness literature. Many group fairness notions can be viewed as instantiations of \mwld{} with different weighting functions and different loss functions (See Appendix~\ref{sec:previous_statistical_notions}).
A major thrust of our work is to guarantee fairness for all or a large number of groups,
which is shared by some recent work \citet{kearns2018gerrymandering,agarwal2018reductions,hebertjohnson2017}.
These works focus on a set of groups that can be expressed as low complexity functions of the sensitive attributes.
Depending on the complexity of the functions, estimating any fairness notion across groups in this set can be NP-hard. In contrast, we consider \emph{all} groups (appropriately weighted), which makes the estimation problem computationally tractable. 
Oblivious to the sensitive attributes, \cite{zhang2016identifying} also try to protect all groups,
using subset scan and parametric bootstrap-based methods to identify subgroups with high classification errors.
They provide some heuristic methods and only focus on finding a group with high predictive bias; whereas, we formally provide guarantees and introduce a regularizer for learning a predictor with low loss discrepancy for all groups.



{\bf Distributional robustness.}
In Distributionally Robust optimization (DRO), the broad goal is to control the
worst-case loss over distributions close to the sampling distribution $p^\star$
\citep{bental2013robust,delage2010distributionally,duchi2016,wang2016likelihood,esfahani2018data,duchi2018learning},
whereas \mwld{} measures the worst-case weighted loss discrepancy over groups, which correspond to restrictions of the support.
The two can be related as follows:
conditional value at risk (CVaR), a particular instantiation of DRO 
considers all distributions $q$ such that $q(z)/p^\star(z) \le \alpha$ for all $z$,
and we relax groups to permit fractional membership ($g$ maps to $[0, 1]$ rather than $\{0, 1\}$),
then DRO with max-norm metric is equivalent to Maximum Weighted Loss Discrepancy (MWLD) with the weighting function $w(g) = \1[\E[g] \ge \alpha]$,
which considers all groups with size at least $\alpha$.

{\bf Loss variance regularization.}
Variance regularization stems from efforts to turn better variance-based generalization bounds into algorithms.
\citet{bennett1962probability,hoeffding1963probability} show that excess risk of a hypothesis can be bounded according to its variance. 
\citet{maurer2009empirical} substitute population variance by its empirical counterpart in the excess-risk bound and introduce sample variance penalization as a regularizer.
Their analysis shows that under some settings, this regularizer can get better rates of convergence than traditional ERM. Variance regularization as an alternative to ERM also has been previously studied in  \citep{mnih2008empirical,audibert2009exploration,shivaswamy2010empirical}.
Recently, \citet{namkoong2017variance} provide a convex surrogate for sample variance penalization va distributionally robust optimization. In this work, we provide a connection between this rich literature and algorithmic fairness.



%

\section{Conclusion}
We defined and studied maximum weighted loss discrepancy (\mwld).
We gave two interpretations for \mwld{}:
1) Group fairness: it bounds the loss of any group compared to the population loss;
2) Robustness: it bounds the loss on a set of new distributions with shifted demographics, where the magnitude of the shift in a group is dictated by the weighting function of the group.
In this paper, we studied computational and statistical challenges of estimating $\mwld$ for a family of weighting functions ($w(g) = \E [g]^k$); and established a close connection between $\mwld(\ws)$ and loss variance.
This motivated loss variance regularization as a way to improve fairness and robustness. We also proposed a variant of loss variance regularization that incorporates information about sensitive attributes. 
%
We showed that we can efficiently estimate $\mwld$ for $w(g) = \E[g]^k$.  What other weighting functions does this hold for?
We relied on the key property that the sup is attained on $O(n)$ possible groups; are there other structures?

\paragraph{Reproducibility.} All code, data and experiments for this paper    are available on the Codalab platform at \url{https://worksheets.codalab.org/worksheets/0x578f01269d644524b0d4ab2a7a2a6984/}.

\paragraph{Acknowledgements.} This work was supported by Open Philanthropy Project Award.  
We would like to thank Tatsunori Hashimoto and Mona Azadkia for several helpful discussions and anonymous reviewers for useful feedback.

\bibliographystyle{plainnat}
\bibliography{refdb/all}

\appendix
\onecolumn
\renewcommand{\thesection}{\Alph{section}}
\section{Previous statistical notions}
\label{sec:previous_statistical_notions}
Statistical notions of fairness can be viewed as instantiations of \mwld{} (\refdef{mwld}) with different weighting functions and appropriate loss functions.
We categorize existing notions into three rough categories and flesh out the associated weighting functions. 
\paragraph{Group fairness:}
Early work of statistical fairness \cite{kamiran2012data,hardt2016,zafar2017fairness} only control discrepancy of loss for a small number of groups defined on sensitive attributes (e.g., race and gender).
This corresponds to a zero-one weighting function where weights of the fixed sensitive groups are $1$, and weights of all other groups are $0$.
\paragraph{Subgroup fairness:}
\citet{kearns2018gerrymandering} argue that group fairness is prone to the ``fairness gerrymandering'' problem whereby groups corresponding to combinations of sensitive attributes can have high loss even if groups corresponding to sensitive attributes individually are protected.\footnote{
	They show an illustrative example in which a predictor where black men and white women suffer very small loss but black women and white men suffer very high loss. Such a predictor has similar losses on the groups of blacks and whites (corresponding to the sensitive attribute race) and men and women (corresponding to gender); however, has a high loss on the ``sub-groups'' defined on the combination of the attributes.}
To mitigate this issue, they consider exponentially many subgroups ($G_\text{subgroup}$) defined by a structured class of functions over the sensitive attributes and control loss of each group in $G_\text{subgroup}$ weighted by its size.
Formally, their definition corresponds to Definition~\ref{def:mwld} with the weighting function $w(g) = \1[g \in G_\text{subgroup}]\E [g]$. 
\paragraph{Large-group fairness:} \citet{hashimoto2018repeated} also consider the setting where sensitive attributes are unknown (which is also our main focus), 
They aim to control the losses of groups whose size is greater than some predefined value $\alpha$ (oblivious to sensitive attributes). 
In terms of Definition~\ref{def:mwld}, this corresponds to the weighting function $w(g)=\1[\E [g] \ge \alpha]$.

\section{Missing Proofs}
\label{sec:appendix}

\begin{proposition}
	\label{prop:robustness}
	Fix any loss function $0\le \ell \le 1$ and weighting function $w(g)$ such that $w(g) \ge \E [g]$. For $q(\cdot) \eqdef w(g) p^\star(\cdot \mid g = 1) + (1 - w(g)) p^\star(\cdot \mid g = 0)$, the following holds:
	
	\begin{align}
	\E_{z \sim q}[\ell] \le \E_{z \sim p^\star} [\ell] + \mwld(w).
	\end{align}
\end{proposition}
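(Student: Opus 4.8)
The plan is to reduce the statement to a one-dimensional inequality by conditioning on membership in $g$, and then to bound a single scalar coefficient. First I would expand the target expectation using the definition of $q$ and linearity: since $q = w(g)\,p^\star(\cdot\mid g=1) + (1-w(g))\,p^\star(\cdot\mid g=0)$, we have $\E_{z\sim q}[\ell] = w(g)\E[\ell\mid g=1] + (1-w(g))\E[\ell\mid g=0]$. Writing $p \eqdef \E[g]$, $\mu_1 \eqdef \E[\ell\mid g=1]$, and $\mu_0 \eqdef \E[\ell\mid g=0]$ (we may assume $0 < p < 1$, since otherwise the conditional distributions defining $q$ are degenerate), the law of total expectation gives $\E[\ell] = p\mu_1 + (1-p)\mu_0$. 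Subtracting, the $\mu_0$ terms combine and I obtain the clean identity
\[
\E_{z\sim q}[\ell] - \E[\ell] = (w(g)-p)(\mu_1-\mu_0).
\]

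Next I would eliminate $\mu_0$ in favor of the quantity $\mu_1-\E[\ell]$ that appears inside $\mwld$. From $\E[\ell] = p\mu_1 + (1-p)\mu_0$ one reads off $\mu_1-\E[\ell] = (1-p)(\mu_1-\mu_0)$, so $\mu_1-\mu_0 = (\mu_1-\E[\ell])/(1-p)$. Substituting yields
\[
\E_{z\sim q}[\ell] - \E[\ell] = \frac{w(g)-p}{1-p}\,(\mu_1-\E[\ell]).
\]
The whole argument now hinges on controlling the scalar coefficient $(w(g)-p)/(1-p)$. Because $w(g)\ge p$ and $p<1$, this coefficient is nonnegative; and because $w(g)\le 1$, a single cross-multiplication (observing that $(w(g)-p)/(1-p)\le w(g)$ is equivalent to $w(g)\le 1$) shows it is bounded above by $w(g)$.

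Finally I would split on the sign of $\mu_1-\E[\ell]$. If $\mu_1\ge\E[\ell]$, the displayed product is at most $w(g)(\mu_1-\E[\ell]) = w(g)\pab{\mu_1-\E[\ell]} \le \mwld(w)$ directly from the definition of $\mwld$. If instead $\mu_1<\E[\ell]$, the product of a nonnegative coefficient with a negative factor is $\le 0 \le \mwld(w)$. Either way the claimed bound holds. I do not expect a genuine obstacle here—the result is the ``simple algebra'' promised in the main text—the only point requiring care is the reduction from $\mu_1-\mu_0$ to $\mu_1-\E[\ell]$ combined with the inequality $w(g)\le 1$, which is precisely what converts the group-dependent mixture shift into the $\mwld$ weighting.
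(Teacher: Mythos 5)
Your proof is correct and follows essentially the same route as the paper's: both arguments reduce to the mixture identity $\E_{z\sim q}[\ell]=w(g)\E[\ell\mid g=1]+(1-w(g))\E[\ell\mid g=0]$ and split on whether $\E[\ell\mid g=1]$ lies above or below $\E[\ell]$, using $w(g)\ge\E[g]$ in the one case and the definition of $\mwld$ (together with $w(g)\le 1$) in the other. Your packaging via the exact identity $\E_{z\sim q}[\ell]-\E[\ell]=\frac{w(g)-\E[g]}{1-\E[g]}\p{\E[\ell\mid g=1]-\E[\ell]}$ is a slightly cleaner organization of the same algebra, not a different argument.
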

\begin{proof}
	We prove the following:
	\begin{align}
	\label{eqn:robustness}
		\E_{z \sim q}[\ell] = w(g) \E_{z \sim p^\star} [\ell \mid g=1] + (1-w(g)) \E_{z \sim p^\star} [\ell \mid g=0] \le \E_{z \sim p^\star} [\ell] + \mwld(w)
	\end{align}
	 
	In the rest of the proof, all expectations are with respect to $p^\star$.
	 
	If $\E [\ell \mid g=1] \le \E [\ell]  \le \E [\ell \mid g=0]$ then it is obvious that increasing weight of group $g$ will decrease the overall loss:
	\begin{align}
	\nonumber w(g) \E [\ell \mid g=1] + (1-w(g)) \E [\ell \mid g=0] =& 
	\E [\ell \mid g = 0] - w(g) \p {\E [\ell \mid g=0] - \E [\ell \mid g=1]}\\
	\nonumber \le &\E [\ell \mid g = 0] - \E [g] \p {\E [\ell \mid g=0] - \E [\ell \mid g=1]} \\
		\nonumber=&\E [g]\E [\ell \mid g = 1] + (1-\E[g])  \E [\ell \mid g=0]\\
	\le &\E [\ell] \le \E [\ell ] + \mwld(w)
	\end{align}
	
	If $\E [\ell \mid g=0] \le \E [\ell] \le \E [\ell \mid g=1]$, as shown in \refeqn{guarantee} by definition of maximum weighted loss discrepancy we have $\E [\ell \mid g=1] \le \E [\ell] + \frac{\mwld (w)}{w (g)}$. We can bound the RHS of \refeqn{robustness} as follows:
	\begin{align}
	\nonumber w(g) \E [\ell \mid g=1] + (1-w(g)) \E [\ell \mid g=0] \le & w (g) \E [\ell] + \mwld (w) +\\
	\nonumber	&(1-w(g)) \E [\ell \mid g=0]\\
	\le &\E [\ell] + \mwld(w)
	\end{align}
\end{proof}

\impossibility*

\begin{proof}
	Consider distribution $p_1$ such that $\mwld(\weq, \ell, h) < \half$ under this distribution:
	\begin{align*}
	\mwld_1 &\eqdef \max \limits_{g \in \sG, \E[g] > 0} \Big | \E_{p_1}[\ell(h, z) \mid g(z) = 1] -  \E_{p_1}[\ell(h, z)] \Big| 
	 < \half ~~ \text{(By assumption)}.
	\end{align*}  
	We now construct a new distribution such that $\mwld(\weq, \ell, h) \ge \half$ for this distribution.
	Let $z_0$ and $z_1$ be two points with loss $0$ and $1$ respectively (i.e., $\ell (h,z_0) =0$, $\ell (h,z_1) = 1$).        
	We construct a new distribution $p_2$ as follows: for $0<\eta <1$ 
	with probability $(1 - \eta)$, $z \sim p_1$ and $z = z_\text{1}$ and $z = z_\text{0}$ with probability $\frac{\eta}{2}$ each.
	
	The maximum weighted loss discrepancy for this distribution $\mwld_2$ is defined analogous to $\mwld_1$ above.
	By the existence of groups $g_\text{1}, g_\text{0}$ corresponding to the singletons $z_\text{1}$ and $z_\text{0}$ with $\E[g_\text{1}] = \E[g_\text{0}] = \frac{\eta}{2} > 0$,

	We now assume an estimator exists and will show a contradiction.
	Let $\gamma_1$ and $\gamma_2$ denote two random variables corresponding to the estimates of $\mwld_1$ and $\mwld_2$ respectively; 
	set $\epsilon = \frac{1}{4} - \frac{\mwld_1}{2}$, we have:
	
	\begin{align}
	\pr \p{|\gamma_2 - \mwld_2 | \ge \epsilon} &\ge \pr \p{\gamma_2 - \mwld_2 \le -\epsilon}\\
	&= \pr \p{\gamma_2 \le \mwld_2 - \epsilon}\\
	&\ge \pr \p{\gamma_2 \le \half - \epsilon }\\
	&= \pr \p {\gamma_2 \le \frac{1}{4} + \frac{\mwld_1}{2}}\\
	&= \pr \p {\gamma_2 \le \mwld_1 + \p {\frac{1}{4} - \frac{\mwld_1}{2}}}\\
	&= \pr \p{\gamma_2 \le \mwld_1+ \epsilon}\\
	&\ge \pr \p{|\gamma_2 - \mwld_1| \le \epsilon}\\
	&\ge \p{1-\eta}^n\pr \p{|\gamma_1 - \mwld_1| \le \epsilon}\\
	&\ge \p{1-\eta}^n\p{1-\delta} \end{align}
	
Now for some $\delta < \hat \delta < \half$
Set $\eta = 1-\sqrt[n]{\frac{\hat \delta}{1-\delta}}$ then we have:
	\begin{align}
	\pr (|\gamma_2 - \mwld_2 | \ge \epsilon) 	&=\p {1-\p {1-\sqrt[n]{\frac{\hat \delta}{1-\delta}}}}^n (1-\delta) \\
	&= \hat \delta > \delta,
	\end{align}
	which contradicts with the assumption that $\pr \left[ \pab{\gamma_2 - \mwld_2} > \epsilon \right] \le \delta$.
\end{proof}

\propGroupToVar*

\begin{proof}
We first prove $\sqrt{\var [\ell]} \le \mwld(\ws)\sqrt{2 - 4\ln\p{\mwld(\ws)}}$.

We assume a more general case $0\le \ell \le L$.
We subtract the average loss, $\mu$, from the loss of each point to center the losses and make the average loss $0$ (without changing the MWLD).
First note that if $\mwld(\ws)=0$, then $\var [\ell]=0$.
If $\mwld(\ws) >0$, let $F$ be the cumulative density function (CDF) of $\ell$.
\begin{align}
\var [\ell] = \int_{-L}^{0} u^2 dF(u) + \int_0^L u^2 dF(u)
\end{align}

We now derive an upper bound for $ \int_0^L u^2dF(u)$; we can compute an upper bound for $ \int_{-L}^{0} u^2 dF(u)$ in a similar way.
For an arbitrary $u>0$, consider the group of individuals with loss $\geq u$. 
By bounding the weighted loss of this group using MWLD, we obtain a bound on $\pr(\ell \ge u)$. 
For brevity, let $\gamma = \mwld(\ws)$.
\begin{align}
\label{eqn:upperbound}
\pr(\ell \ge u) u^2 \le \pr(\ell \ge u)\E [\ell \mid \ell \ge u]^2 \le \gamma^2 
\implies \pr(\ell \ge u)\le \frac{\gamma^2}{u^2}
\end{align}

Using integration by parts, we can express $\E [\ell^2]$ using $\pr (\ell \ge u)$. We then bound $\E [\ell^2]$ using the above upper bound \refeqn{upperbound}. 
\begin{align}
\int_{0}^{L} u^2dF(u)
=&   \L - \pr (\ell \ge u) u^2 \Big\rvert_{0}^L - 2\int_{0}^{L} -\pr(\ell \ge u)udu\\  
=&2\int_{0}^{L} \pr(\ell \ge u)udu\\
=&   2\int_{0}^{\gamma} u\pr(\ell \ge u)du+
2\int_{\gamma}^{L} u\pr(\ell \ge u)du\\
\le& 2\int_{0}^{\gamma} udu+
2\int_{\gamma}^{L} u\frac{\gamma^2}{u^2}du\\
\le& \gamma^2 + 2\gamma^2\ln\p{\frac{L}{\gamma}}
\end{align}

By computing a similar bound for $ \int_{-L}^{0} u^2 dF(u)du$, the following holds:

\begin{align}
\var [\ell] \le 2\gamma^2\p {1+ 2\ln\p{\frac{L}{\gamma}}}.
\end{align}
Setting $L=1$ we have:
\begin{align}
\var [\ell] \le \gamma^2\p {2 - 4\ln\p{\gamma}}
\end{align}

Now we prove $\mwld(\ws) \le \sqrt {\var [\ell]}$.

	We prove for any $g$, $\E [g] \p {\E [\ell \mid g] - \E [\ell]}^2 \le \var[\ell]$, using law of total variance.  
	Note that $\E [g(z)] = \pr(g(z)=1)$.
	\begin{align}
	&\var[\E[\ell\mid g]] + \E [\var [\ell\mid g]] = \var[\ell] \\
	&\var [\E [\ell \mid g]] = \E [g]\E [1-g](\E[\ell|g=1] - \E[\ell|g=0])^2 \le \var[\ell]\\
	&\E [g]\E [1-g]\left(\frac{\E [g]\E[\ell\mid g=1]}{\E [g]}-\frac{\E [1-g]\E[\ell\mid g=0]}{\E [1-g]}\right)^2\le \var[\ell]
	\end{align}
	Now we that use the fact $\E [g]\E[\ell \mid g=1] + \E [1-g] \E[\ell \mid g=0] = \E[\ell]$.
	\begin{align}
	&\E [g]\E [1-g]\left(\frac{\E [g]\E[\ell\mid g=1]}{\E [g]}-\frac{\E [\ell]- \E [g]\E [\ell \mid g=1]}{\E [1-g]}\right)^2\le \var[\ell]\\
	&\frac{\Big(\E [g]\E [\ell\mid g=1]\left(\E [1-g] + \E [g]\right) - \E [g]\E [\ell]\Big)^2}{\E [g]\E [1-g]}\le \var[\ell]\\
	&\label{eqn:stronger_bound} \frac{\E [g]\left(\E[\ell\mid g=1] - \E[\ell]\right)^2}{\E [1-g]}\le \var[\ell] \\
	& \E [g]\left(\E[\ell\mid g=1] - \E[\ell]\right)^2 \le \var[\ell]\\
	&\mwld(\ws) \le \sqrt{\var [\ell]}
	\end{align}

\end{proof}

\begin{corollary}
\label{cor:protectedGroup}
For any measurable loss function $0 \le \ell \le 1$. The following holds,
\begin{align}
\label{eqn:way1p}
\mwld(\wsa) &\leq \sqrt{\var[\E [\ell \mid A]]} \le \mwld(\wsa)\sqrt{2-4\ln(\mwld(\wsa))}
\end{align}
\end{corollary}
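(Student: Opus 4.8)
The plan is to deduce \refcor{protectedGroup} directly from \refprop{groupToVar}, by applying the latter to the \emph{coarsened} loss $\tilde\ell \eqdef \E[\ell \mid A]$. Since $0 \le \ell \le 1$, its conditional expectation satisfies $0 \le \tilde\ell \le 1$, so \refprop{groupToVar} applies to $\tilde\ell$, and by definition $\var[\tilde\ell] = \var[\E[\ell \mid A]]$. The key bridge is the observation that for any $g \in \sG_A$ (i.e., any group that is a function of $A$ alone), the tower property gives $\E[\ell \mid g = 1] = \E[\E[\ell \mid A] \mid g=1] = \E[\tilde\ell \mid g=1]$ and $\E[\ell] = \E[\tilde\ell]$, since the event $\{g=1\}$ is $\sigma(A)$-measurable. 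Consequently,
\begin{align}
\mwld(\wsa) = \sup_{g \in \sG_A} \sqrt{\E[g]}\,\pab{\E[\tilde\ell \mid g=1] - \E[\tilde\ell]},
\end{align}
so $\mwld(\wsa)$ is exactly the weighted loss discrepancy of $\tilde\ell$, but with the supremum restricted to $\sG_A$.

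For the left inequality $\mwld(\wsa) \le \sqrt{\var[\E[\ell\mid A]]}$, I would reuse the law-of-total-variance step of \refprop{groupToVar} applied to $\tilde\ell$: for every $g \in \sG_A$ we have $\E[g]\p{\E[\tilde\ell\mid g=1] - \E[\tilde\ell]}^2 \le \var[\E[\tilde\ell \mid g]] \le \var[\tilde\ell]$, and taking the square root and the supremum over $g \in \sG_A$ gives the claim.

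For the right inequality, I would transcribe the lower-bound argument of \refprop{groupToVar} with $\tilde\ell$ in place of $\ell$. After centering so that $\E[\tilde\ell]=0$ (which leaves $\var[\tilde\ell]$ unchanged and keeps level sets in $\sG_A$, as centered level sets are still functions of $A$), the critical step bounds the tail via the group $g_u \eqdef \1[\tilde\ell \ge u]$, yielding $\pr(\tilde\ell \ge u) \le \mwld(\wsa)^2/u^2$; the degenerate case $\mwld(\wsa)=0$ then forces $\tilde\ell = 0$ a.s. and hence $\var[\tilde\ell]=0$, exactly as in the proposition. The integration-by-parts computation on $\var[\tilde\ell]$ proceeds verbatim, producing $\var[\tilde\ell] \le \mwld(\wsa)^2\p{2 - 4\ln(\mwld(\wsa))}$.

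The one place requiring care, and the main obstacle, is justifying that restricting the supremum to $\sG_A$ costs nothing in the lower bound. This hinges on the fact that the tail-threshold groups $g_u = \1[\tilde\ell \ge u]$ used in the proposition's proof are themselves functions of $A$ (because $\tilde\ell = \E[\ell \mid A]$ is), hence lie in $\sG_A$. This lets me use $\gamma = \mwld(\wsa)$ rather than the unrestricted MWLD of $\tilde\ell$ throughout, after which the remainder of the argument is identical to \refprop{groupToVar}.
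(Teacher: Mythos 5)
Your proposal is correct and matches the paper's proof in essence: the paper likewise applies \refprop{groupToVar} to the coarsened loss $\E[\ell\mid A]$, viewing the attribute settings as a new sample space so that groups over that space are exactly $\sG_A$. Your version is in fact more careful than the paper's one-line argument, since you explicitly verify the tower-property identities and that the threshold groups $\1[\E[\ell\mid A]\ge u]$ lie in $\sG_A$, which is the step the paper leaves implicit.
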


\begin{proof}
Let $\alpha$ be the random variable indicating the different values of sensitive attributes ($A$), with the following distribution: $\pr(\alpha) = \pr\left(A(z) = \alpha\right)$; and loss on point $\alpha$ is defined as the expected loss of all point with sensitive attribute $\alpha$, formally: $\ell(h,\alpha) = \E(\ell(h,z) \mid A(z)=\alpha)$. 
As $g$ is only defined on $A$ we can now use $\alpha$ in the \refprop{groupToVar} to prove the corollary.
\end{proof}

\section{Generalization bounds}
\label{sec:generalization_bounds}

\begin{theorem}[\citet{maurer2009empirical}]
	\label{thm:variance_generalizes}
	Let $n\ge2 $, and $z_1, z_2, \hdots z_n$ denote the training data. Let $\ell$ be the losses of individuals in the training data, with values in
	$[0,1]$. 
	Let $\widehat\var$ denote the empirical variance, then for any $\delta > 0$, we have
	\begin{align}
	\pr \Big[ \left|\sqrt{\var[\ell]} - \sqrt{\widehat\var_n[\ell]}\right| > \sqrt { \frac{2 \ln \frac{2}{\delta}} {n-1}} \Big] \leq \delta.
	\end{align}
\end{theorem}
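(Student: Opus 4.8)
The plan is to follow the argument of \citet{maurer2009empirical}, which rests on two facts about the \emph{unbiased} sample variance $\widehat\var_n[\ell] = \frac{1}{n-1}\sum_{i=1}^n(\ell_i - \bar\ell)^2$: that it is unbiased, $\E[\widehat\var_n[\ell]] = \var[\ell]$, and that the sample standard deviation $\widehat\Sigma \eqdef \sqrt{\widehat\var_n[\ell]}$, viewed as a function of the $n$ i.i.d. losses $\ell_1,\dots,\ell_n \in [0,1]$, concentrates sharply around its mean. First I would write $\widehat\Sigma$ as the scaled distance of $(\ell_1,\dots,\ell_n)$ to the diagonal, $\widehat\Sigma = \frac{1}{\sqrt{n-1}}\|\ell - \bar\ell\,\bone\|_2$, which exhibits it as a convex, $\frac{1}{\sqrt{n-1}}$-Lipschitz function of the data and immediately gives the pointwise sensitivity bound $|\widehat\Sigma(\ell) - \widehat\Sigma(\ell^{(i)})| \le \frac{1}{\sqrt{n-1}}$ when a single loss is perturbed within $[0,1]$.

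The key step, which I expect to be the main obstacle, is to upgrade this into a deviation bound that genuinely decays with $n$: a naive application of McDiarmid's inequality with the per-coordinate bound $\tfrac{1}{\sqrt{n-1}}$ yields only a constant, $n$-independent deviation, since it ignores that the losses cannot all be far from their mean simultaneously. Instead I would exploit the \emph{self-bounding} structure of $\widehat\Sigma$. Minimizing over a single coordinate shrinks $\widehat\Sigma^2$ by at most (a universal constant times) $\tfrac{(\ell_i-\bar\ell)^2}{n-1}$, so the induced change in $\widehat\Sigma$ is at most $\tfrac{(\ell_i-\bar\ell)^2}{(n-1)\widehat\Sigma}$, and the sum of its squares is at most $\tfrac{\sum_i(\ell_i-\bar\ell)^4}{(n-1)^2\widehat\Sigma^2} \le \tfrac{\sum_i(\ell_i-\bar\ell)^2}{(n-1)^2\widehat\Sigma^2} = \tfrac{1}{n-1}$, where $(\ell_i-\bar\ell)^2 \le 1$ reduces the fourth powers and the identity $\sum_i(\ell_i-\bar\ell)^2 = (n-1)\widehat\Sigma^2$ cancels the $\widehat\Sigma^2$ in the denominator. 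Feeding this variance proxy into an Efron--Stein / entropy-method concentration inequality then gives $\pr[\,|\widehat\Sigma - \E\widehat\Sigma| \ge t\,] \le 2\exp\!\big(-(n-1)t^2/2\big)$.

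Finally I would convert concentration of $\widehat\Sigma$ around its own mean into the stated comparison with $\sqrt{\var[\ell]}$. One direction is immediate from Jensen's inequality applied to the concave square root, $\E\widehat\Sigma \le \sqrt{\E[\widehat\var_n[\ell]]} = \sqrt{\var[\ell]}$, so $\widehat\Sigma - \sqrt{\var[\ell]} \le \widehat\Sigma - \E\widehat\Sigma$ is controlled directly by the upper tail. For the reverse direction I would bound the Jensen gap using $\var[\ell] = \E[\widehat\Sigma^2] = \var[\widehat\Sigma] + (\E\widehat\Sigma)^2$ together with the Efron--Stein bound $\var[\widehat\Sigma] \le \tfrac{1}{n-1}$, which yields $\sqrt{\var[\ell]} - \E\widehat\Sigma \le \tfrac{1}{\sqrt{n-1}}$ and lets the residual bias be absorbed into the deviation term handled by the lower tail. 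Combining the two directions and taking a union bound over the two tails produces the claimed threshold $\sqrt{2\ln(2/\delta)/(n-1)}$, with the $\ln 2$ coming from the union bound.
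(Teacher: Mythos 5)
First, a point of comparison: the paper does not prove this statement at all --- it is imported verbatim, with attribution, from \citet{maurer2009empirical} and used as a black box in the proof of Theorem~\ref{thm:coarse_variance_generalizes}. So your attempt can only be judged against the cited source, whose proof works with the \emph{self-bounding} property of $(n-1)\widehat\var_n[\ell]$ itself (the squared quantity), not with concentration of $\widehat\Sigma=\sqrt{\widehat\var_n[\ell]}$ around its own mean.

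Your sketch is in the right spirit, but it has two genuine gaps. (1) The claimed two-sided bound $\pr\big[|\widehat\Sigma-\E\widehat\Sigma|\ge t\big]\le 2\exp(-(n-1)t^2/2)$ does not follow from what you control. The condition $\sum_i (Z-Z_i)^2\le 1/(n-1)$, with $Z_i$ the infimum over the $i$-th coordinate, feeds the entropy-method inequality that yields only the \emph{upper} tail $\pr[Z>\E Z+t]\le e^{-t^2/(2v)}$; the lower tail needs the analogous control of $\sum_i(\sup_{\ell_i'}Z-Z)^2$, and for $\widehat\Sigma$ that quantity is $\Theta(1)$, not $O(1/n)$: near the configuration where all losses are equal, perturbing any single loss to an extreme raises $\widehat\Sigma$ by about $\tfrac{1}{2\sqrt{n}}$, and $n$ such squared increments sum to a constant. (Talagrand's convex-distance inequality would give a two-sided bound since $\widehat\Sigma$ is convex and $\tfrac{1}{\sqrt{n-1}}$-Lipschitz, but around the median and with constants $4e^{-(n-1)t^2/4}$, which do not reproduce the stated threshold.) (2) Even granting two-sided concentration, the Jensen-gap bias $\sqrt{\var[\ell]}-\E\widehat\Sigma\le \tfrac{1}{\sqrt{n-1}}$ is an additive term of the \emph{same order} as the deviation and cannot be ``absorbed'': you would end with a threshold like $(1+\sqrt{2\ln(2/\delta)})/\sqrt{n-1}$, not $\sqrt{2\ln(2/\delta)/(n-1)}$. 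Maurer and Pontil sidestep both issues by never centering at $\E\widehat\Sigma$: they apply the self-bounding upper- and lower-tail inequalities to $(n-1)\widehat\var_n[\ell]$ (whose exponents carry $\E Z$ in the denominator), use unbiasedness so the center is exactly $(n-1)\var[\ell]$ with no bias term, and pass to the square-root scale via $|\sqrt{a}-\sqrt{b}|=|a-b|/(\sqrt{a}+\sqrt{b})$, so the $\E Z$ factors cancel and the clean constant falls out. Your proof as written establishes the correct rate but not the stated inequality.
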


\begin{theorem}[Hoeffding's inequality]
	\label{thm:hoeffding}
Let $\ell_1,\dots,\ell_n$, be $n$ i.i.d. random variables with values in $[0,1]$ and let $\delta >0$. Then with probability at least $1-\delta$ we have:
\begin{align}
\left|\E [\ell] - \widehat\E [\ell]\right| \le \sqrt{\frac{\ln{2/\delta}}{2n}}
\end{align}
\end{theorem}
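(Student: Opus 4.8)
The plan is to prove this by the Cramér--Chernoff method, which is the standard route to Hoeffding's inequality. I would first control the one-sided deviation $\pr[\widehat\E[\ell] - \E[\ell] \ge t]$ for the specific choice $t = \sqrt{\ln(2/\delta)/(2n)}$, then recover the two-sided statement by applying the same argument to the negated variables and taking a union bound over the two tails; this union bound is exactly what produces the factor of $2$ inside the logarithm.

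First I would center the variables. Writing $S_n = \sum_{i=1}^n (\ell_i - \E[\ell_i])$ so that $\widehat\E[\ell] - \E[\ell] = S_n/n$, the exponential Markov inequality gives, for any $s > 0$,
\begin{align}
\pr[S_n \ge nt] \le e^{-snt}\,\E\left[e^{sS_n}\right] = e^{-snt}\prod_{i=1}^n \E\left[e^{s(\ell_i - \E[\ell_i])}\right],
\end{align}
where the factorization uses independence of the $\ell_i$. The central ingredient is Hoeffding's lemma: for a mean-zero random variable $Y$ supported in an interval $[a,b]$, one has $\E[e^{sY}] \le e^{s^2(b-a)^2/8}$. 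Since each $\ell_i \in [0,1]$, the centered variable $\ell_i - \E[\ell_i]$ lies in an interval of length $1$, so every factor above is at most $e^{s^2/8}$ and the product is at most $e^{ns^2/8}$. This yields $\pr[S_n \ge nt] \le \exp(-snt + ns^2/8)$. Minimizing the exponent over $s > 0$ (the optimum is $s = 4t$) gives $\pr[S_n \ge nt] \le e^{-2nt^2}$. The identical argument applied to $-S_n$ controls the lower tail, and a union bound gives $\pr[|S_n| \ge nt] \le 2e^{-2nt^2}$. Substituting $t = \sqrt{\ln(2/\delta)/(2n)}$ makes the right-hand side exactly $\delta$, which is the claimed bound.

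The main obstacle, and really the only nontrivial step, is Hoeffding's lemma itself; everything else is the routine Chernoff optimization and union bound sketched above. I would establish it by studying the log-moment-generating function $\psi(s) = \ln\E[e^{sY}]$. A direct differentiation shows $\psi(0) = 0$ and $\psi'(0) = \E[Y] = 0$, while $\psi''(s)$ equals the variance of $Y$ under the exponentially tilted measure with density proportional to $e^{sy}$. Because that tilted variable is still supported in $[a,b]$, its variance is at most $(b-a)^2/4$ by the standard bound on the variance of a bounded random variable. A second-order Taylor expansion of $\psi$ with this uniform bound on $\psi''$ then gives $\psi(s) \le s^2(b-a)^2/8$, which is precisely the lemma with $(b-a) = 1$ in our setting.
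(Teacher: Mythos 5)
Your proof is correct: the Chernoff bound with Hoeffding's lemma, the optimization $s=4t$ giving the exponent $-2nt^2$, the union bound over the two tails producing the factor $2$, and the substitution $t=\sqrt{\ln(2/\delta)/(2n)}$ all check out exactly. The paper states this as a classical result and offers no proof of its own, so there is nothing to compare against; your argument is the standard textbook derivation and is complete.
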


As a caveat, empirical coarse loss variance converges to its population counterpart slower than loss variance.
Let $T$ be the number of different settings of the sensitive attributes ($T=\log (|\sG_A|)$).
As an example, if we have two sensitive attributes with each having $20$ different values, then $T=400$ and $\lvert \sG_A \rvert = 2^{400}-1$.
Empirical coarse loss variance converges to the population coarse variance as $\sqrt{T/n}$, while empirical loss variance convergences to its population counterpart as $\sqrt{1/n}$.
In the following theorem, we present the formal proof for this bound.


\begin{theorem}
\label{thm:coarse_variance_generalizes}
Let $n$ denote the number of training data points;
and $0 \le \ell \le $ be the loss function. 
Let $A$ be the set of all possible settings of sensitive features, and $T$ be the number of possible settings.
Then for any $\delta > 0$, and $n > 2$, with probability of at least $1-(T+3)\delta$, the following holds:
\begin{align}
\left\lvert\var [ \E [\ell \mid A]] - \widehat\var [\widehat \E [\ell \mid A]]\right\rvert \le \sqrt{\frac{2\ln(\frac{2}{\delta})}{n-1}} + 
\sqrt{\frac{(2T+8)\ln(\frac{2}{\delta})}{n}}
\end{align}
\end{theorem}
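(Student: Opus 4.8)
The plan is to reduce the coarse loss variance to the ordinary variance of a single $[0,1]$-valued random variable and then control the two resulting error sources separately. Define the \emph{coarsened loss} $\tilde\ell(z) \eqdef \E[\ell \mid A = A(z)]$; since $0 \le \ell \le 1$ it takes values in $[0,1]$, and $\var[\E[\ell\mid A]]$ is by definition $\var[\tilde\ell]$ (this is the ``treat each setting of $A$ as a point whose loss is its conditional mean'' viewpoint of \refcor{protectedGroup}). Because $z_1,\dots,z_n$ are i.i.d., so are $\tilde\ell_1,\dots,\tilde\ell_n$, which is what lets me invoke i.i.d.\ variance concentration. Writing $a_i = A(z_i)$ and $n_a = |\{i : a_i = a\}|$, the empirical coarse loss variance $\widehat\var[\widehat\E[\ell\mid A]]$ is the empirical variance of the \emph{estimated} coarsened losses $\widehat{\tilde\ell}_i \eqdef \widehat\E[\ell\mid A=a_i]$, which differ from $\tilde\ell_i$ only through the conditional-mean estimation error inside each group.

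First I would insert the intermediate quantity $\widehat\var_n[\tilde\ell]$, the empirical variance of the \emph{true} coarsened losses, and split by the triangle inequality:
\begin{align*}
\left|\sqrt{\var[\tilde\ell]} - \sqrt{\widehat\var[\widehat{\tilde\ell}]}\right| \le \left|\sqrt{\var[\tilde\ell]} - \sqrt{\widehat\var_n[\tilde\ell]}\right| + \left|\sqrt{\widehat\var_n[\tilde\ell]} - \sqrt{\widehat\var[\widehat{\tilde\ell}]}\right|.
\end{align*}
The first term is handled directly by \refthm{variance_generalizes} applied to the i.i.d.\ sample $\tilde\ell_1,\dots,\tilde\ell_n \in [0,1]$, which yields $\sqrt{2\ln(2/\delta)/(n-1)}$ with probability $1-\delta$, matching the first term of the claim exactly. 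The variance-difference form stated in the theorem then follows by multiplying through by $\sqrt{\var[\tilde\ell]}+\sqrt{\widehat\var[\widehat{\tilde\ell}]}$, a factor that is bounded since all quantities lie in $[0,1]$.

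For the second term I would use that the empirical standard deviation is a seminorm of the centered data vector, so
\begin{align*}
\left|\sqrt{\widehat\var_n[\tilde\ell]} - \sqrt{\widehat\var[\widehat{\tilde\ell}]}\right| \le \sqrt{\frac{1}{n-1}\sum_{i=1}^n \left(\tilde\ell_i - \widehat{\tilde\ell}_i\right)^2} = \sqrt{\sum_a \frac{n_a}{n-1}\,\Delta_a^2},
\end{align*}
where $\Delta_a \eqdef |\widehat\E[\ell\mid A=a] - \E[\ell\mid A=a]|$, using that $\tilde\ell_i - \widehat{\tilde\ell}_i$ depends on $i$ only through its group $a_i$. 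Conditioning on the (random) group sizes and applying Hoeffding (\refthm{hoeffding}) within each group, followed by a union bound over the at most $T$ observed settings, gives $\Delta_a \le \sqrt{\ln(2/\delta)/(2n_a)}$ for all $a$ simultaneously with probability $1-T\delta$. Substituting, the weights $n_a$ cancel: $\sum_a \frac{n_a}{n-1}\Delta_a^2 \le \sum_a \frac{\ln(2/\delta)}{2(n-1)} \le \frac{T\ln(2/\delta)}{2(n-1)}$, which is of the claimed order $\sqrt{T/n}$ and comfortably inside the stated $\sqrt{(2T+8)\ln(2/\delta)/n}$. Collecting the Maurer--Pontil event, the $T$ per-group Hoeffding events, and a constant number of auxiliary bounds used in passing between standard deviation and variance then gives the overall failure probability $1-(T+3)\delta$.

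The main obstacle is this second term: controlling the plug-in error of the conditional means $\widehat\E[\ell\mid A=a]$ when the per-group sample sizes $n_a$ are themselves random. The structural fact that makes the clean $\sqrt{T/n}$ rate emerge is that each group's squared estimation error $\Delta_a^2 = O(1/n_a)$ is reweighted exactly by its empirical frequency $n_a/(n-1)$, so the random sizes cancel and the bound depends on the sample only through the \emph{number} of settings $T$. This is precisely the quantitative reason coarse loss variance concentrates a factor $\sqrt{T}$ more slowly than ordinary loss variance.
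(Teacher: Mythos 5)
Your proof is correct, and it reaches the stated bound with room to spare. It shares the paper's overall skeleton: both insert the empirical variance of the \emph{true} coarsened losses $\widehat\var\!\left[\E[\ell\mid A]\right]$ as the intermediate quantity, handle the purely statistical half by \refthm{variance_generalizes} applied to the i.i.d.\ $[0,1]$-valued variables $\E[\ell\mid A(z_i)]$, and handle the plug-in error of the conditional means by per-group Hoeffding with a union bound over the $T$ settings. Where you genuinely diverge is on the second half. The paper expands $\widehat\var[Y]-\widehat\var[\widehat Y]$ into moments, splitting it as $\bigl(\widehat\E[Y^2]-\widehat\E[\widehat Y^2]\bigr)$ plus $\bigl(\widehat\E[\widehat Y]^2-\widehat\E[Y]^2\bigr)$; the first piece is a frequency-weighted sum of per-group errors controlled via $\frac{1}{n}\sum_a\sqrt{n_a}\le\sqrt{T/n}$, and the second requires two further Hoeffding applications routed through $\E[\ell]$ --- this is exactly where the $+8$ inside the square root and the $+3$ in the failure probability come from. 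You instead exploit that the empirical standard deviation is a seminorm of the centered data vector, so a single reverse triangle inequality bounds the entire gap by $\bigl(\tfrac{1}{n-1}\sum_a n_a\Delta_a^2\bigr)^{1/2}$, after which the random group sizes cancel exactly against the $O(1/n_a)$ squared errors. This buys a failure probability of $(T+1)\delta$ rather than $(T+3)\delta$ and a second term of order $\sqrt{T\ln(2/\delta)/(2(n-1))}$, both strictly inside the theorem's statement (your closing attribution of two extra events to ``auxiliary bounds'' is unnecessary --- your argument does not use them), at the negligible cost of converting between standard deviations and variances at the end, which is harmless since both $\sqrt{\var[\tilde\ell]}$ and $\sqrt{\widehat\var[\widehat{\tilde\ell}]}$ are at most $1/2$.
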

\begin{proof}
    Throughout the proof, we use this fact that if two random variables and their difference are bounded, then the difference of their square is bounded as well.
	Formally, for any $0 \le a,\widehat a \le M$ we have:
	\begin{align}
	\label{eqn:square}
	|a^2 - \widehat a^2| = (a+\widehat a) |a - \widehat a|\le 2M|a-\widehat a|
	\end{align}

	For simplifying the notations, let $Y$ and $\widehat{Y}$ denote $\E [\ell \mid A]$ and let $\widehat\E [\ell \mid A]$ respectively.
	
\begin{align}
\left\lvert\var [ Y] - \widehat\var [\widehat Y]\right\rvert =
&\left\lvert\var [ Y] - \widehat\var [ Y] + \widehat\var [Y] - \widehat\var [\widehat Y]\right\rvert \\
\le &\left\lvert\var [ Y] - \widehat\var [ Y] \right\rvert + \left\lvert \widehat\var [ Y] - \widehat\var [\widehat Y]\right\rvert\\
=&\left\lvert \var \left[ Y\right] - \widehat\var [  Y] \right\rvert + 
\left\lvert\widehat\E [Y^2] - \widehat\E [Y]^2 - \widehat\E [\widehat Y^2] + \widehat\E [\widehat Y]^2\right\rvert \\
\le &
\underbrace{\left\lvert \var \left[ Y\right] - \widehat\var [  Y] \right\rvert}_{(i)} + \underbrace{\left\lvert\widehat\E [Y^2] - \widehat\E [\widehat Y^2]\right\rvert}_{(ii)} + 
\underbrace{\left\lvert\widehat\E [\widehat Y]^2  - \widehat\E [Y]^2 \right\rvert}_{(iii)}
\end{align}

Using \refthm{variance_generalizes} and \refeqn{square} while considering $\sqrt{\var[Y]} \le 0.5$, with probability at least $1-\delta$, (i) is less than $\sqrt{\frac{2\ln(\frac{2}{\delta})}{n-1}}$.

We now compute an upper bound for (iii). First note that: 
$\E [\widehat\E [Y]] = \E [Y] = \E [\E [\ell \mid A]] = \E [\ell]$ and similarly,
$\E [\widehat\E [\widehat Y]] = \E [\widehat\E [\widehat\E [ \ell \mid A]]] = \E [\widehat\E [\ell]] = \E [\ell]$.

\begin{align}
\left\lvert\widehat\E [\widehat Y]^2  - \widehat\E [Y]^2 \right\rvert 
=& \left\lvert\widehat\E [\widehat Y]^2  - \E [\ell]^2 + \E [\ell]^2 - \widehat\E [Y]^2 \right\rvert \\
\le & 
\left\lvert\widehat\E [\ell]^2 - \E [\ell]^2 \right\rvert +  \left\lvert\E [\widehat\E [ Y]]^2 - \widehat\E [Y]^2 \right\rvert\\
\le & 4 \sqrt{\frac{\ln \p {2/\delta}}{2n}} \label{eqn:hoefdingUse}\\
= & \sqrt{\frac{8\ln \p {2/\delta}}{n}}
\end{align}

We derived \refeqn{hoefdingUse}, using \refthm{hoeffding} and \refeqn{square} considering $0 \le \ell \le 1$.

Finally, we compute an upper bound for (ii). Let $0 \le n_i \le n$ denote the number of time that sensitive attributes setting of $a_i$ appeared in the training data.
\begin{align}
\left\lvert\widehat\E [Y^2] - \widehat\E [\widehat Y^2]\right\rvert =&\frac{1}{n}\left |\sum_{i=1}^{T} n_i \p {\E [\ell \mid A=a_i]^2 - \widehat\E [\ell \mid A=a_i]^2}\right|\\
\le&\frac{1}{n}\sum_{i=1}^{T} n_i \left|\E [\ell \mid A=a_i]^2 - \widehat\E [\ell \mid A=a_i]^2\right|\\
\le& \frac{1}{n}\sum_{i=1}^{T} 2n_i \sqrt{\frac{\ln\p{2/\delta}}{2n_i}}\\
=&\sqrt{2\ln\p{2/\delta}} \sum_{i=1}^{T}  \sqrt{\frac{n_i}{n^2}}\label{eqn:hoefdingUse2}\\
\le& \sqrt{\frac{2T\ln\p{2/\delta}}{n}}
\end{align}

We derived \refeqn{hoefdingUse2}, using \refthm{hoeffding}, and \refeqn{square} considering $0 \le \ell \le 1$.

\end{proof}

\section{Uniform convergence}
The following intermediate lemma is used in proof of \refthm{auditor}.
\sidecaptionvpos{figure}{t}
\begin{SCfigure}
	\includegraphics[width=0.25\textwidth]{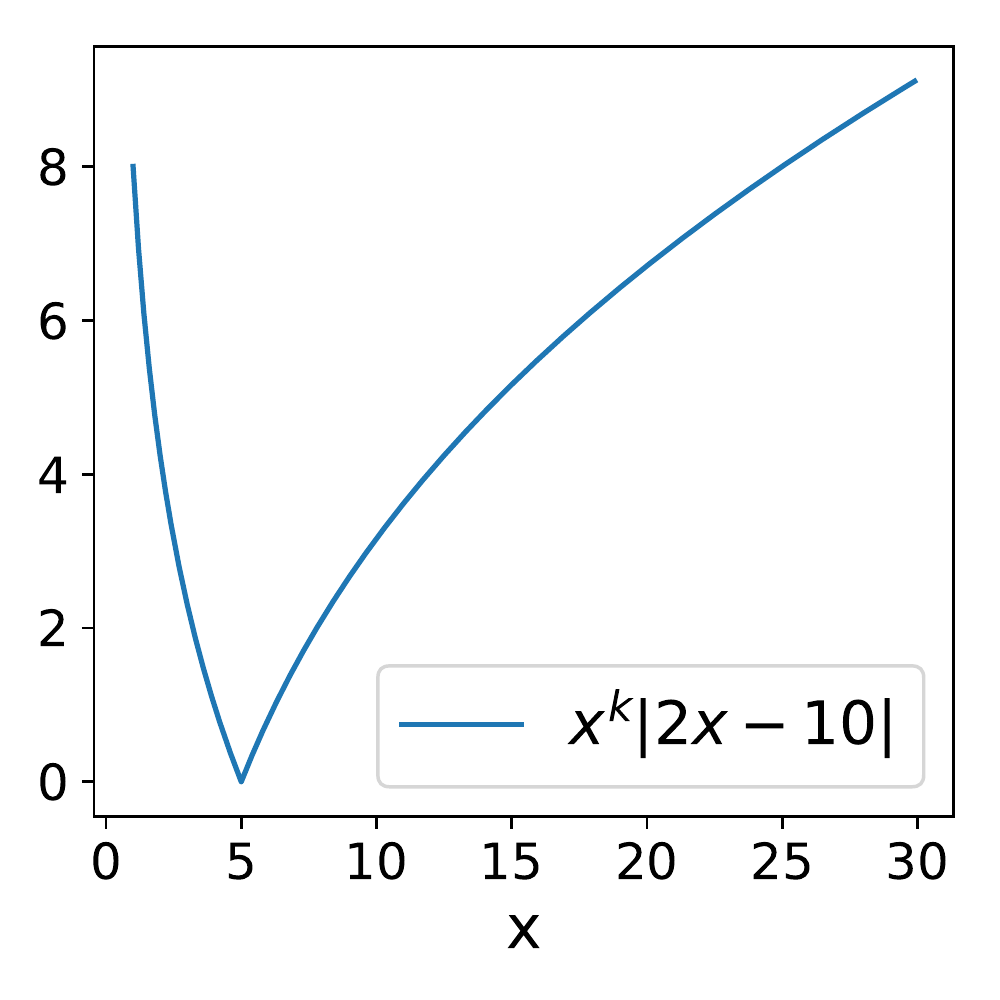}%
	\includegraphics[width=0.25\textwidth]{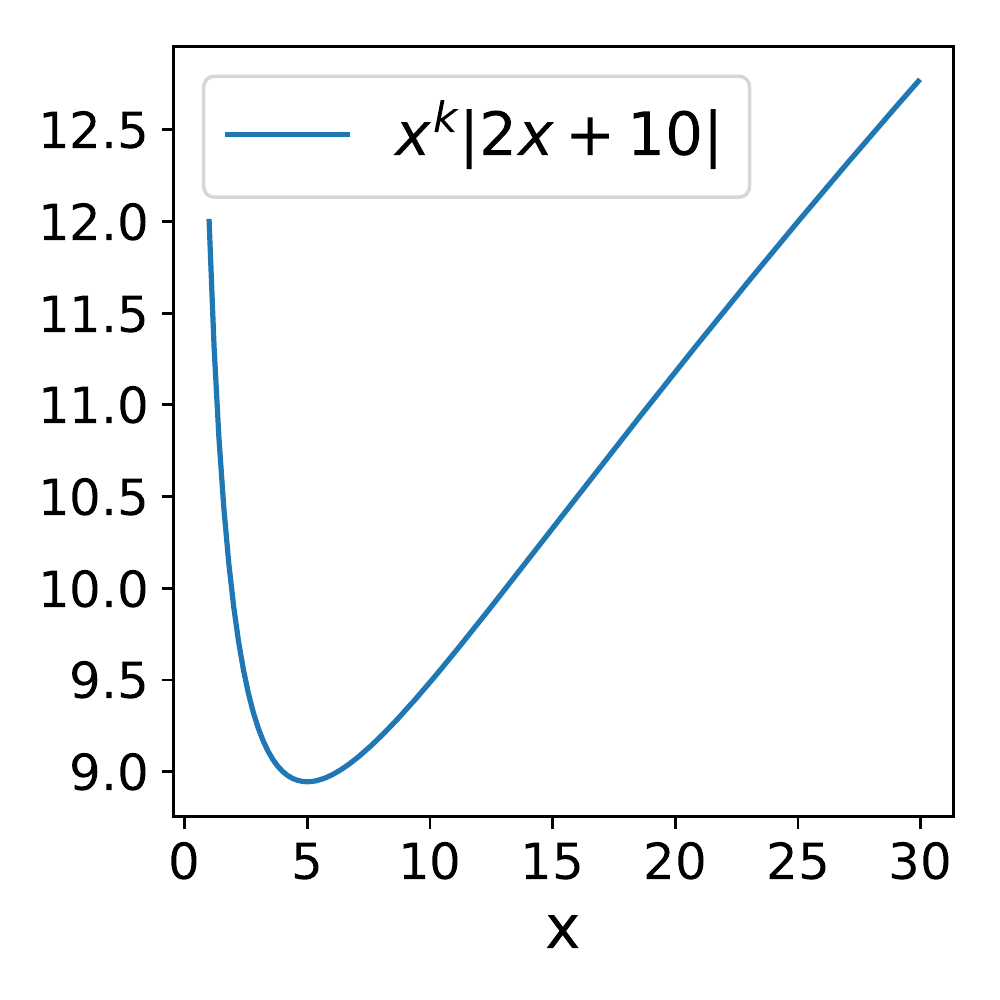}
	\caption{\protect\rule{0ex}{5ex}
		\label{fig:lemma1_figure} Two different graphs for $f(x)$ as explained in \reflem{computing_U_efficiently} for different values of $a$ and $b$.}
\end{SCfigure}
\begin{lemma}
	\label{lem:computing_U_efficiently}
Given $n$ data points $z_1,\dots,z_n$, let $\widehat\mwld(\wk)$ be as defined in \refeqn{empirical_mwld}.
There exists a real number $u$ such that either the group containing data points with loss less than $u$ or the group containing data points with loss greater than $u$ has the maximum empirical weighted loss discrepancy.
\end{lemma}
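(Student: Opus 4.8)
The plan is to first reduce from arbitrary groups to subsets of the sample. Over the $n$ observed points, any group $g \in \sG$ is determined by the subset $S \subseteq \{z_1,\dots,z_n\}$ of sampled points it contains, since $\widehat\E[g] = |S|/n$ and $\widehat\E[\ell\mid g=1]$ is just the average loss over $S$. Hence $\widehat\mwld_n(\wk)$ is the maximum over nonempty $S$ of $F(S) \eqdef (|S|/n)^k\,\pab{\bar\ell_S - \bar\ell}$, where $\bar\ell_S$ is the mean loss on $S$ and $\bar\ell \eqdef \widehat\E[\ell]$. I would split this into a high side and a low side, writing $\widehat\mwld_n(\wk) = \max\{\,\max_S (|S|/n)^k(\bar\ell_S - \bar\ell),\ \max_S (|S|/n)^k(\bar\ell - \bar\ell_S)\,\}$, and then show the first maximum is attained by a group $\{z : \ell(h,z) > u\}$ and, by a symmetric argument, the second by a group $\{z : \ell(h,z) < u\}$.

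For the high side, I would sort the losses in decreasing order and observe that, for a fixed group size $m$, the subset maximizing $\bar\ell_S$ (hence $F$) is exactly the set of the $m$ points with the largest losses. So the high-side maximum equals $\max_m a_m$ with $a_m \eqdef (m/n)^k(\text{mean of top } m - \bar\ell)$, attained on some ``top-$m$'' group. When all losses are distinct, every top-$m$ group is already a threshold group $\{\ell > u\}$ (take $u$ strictly between the $m$-th and $(m+1)$-th largest loss), which closes this case immediately.

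The remaining work, and the main obstacle, is handling ties: I must show $\max_m a_m$ is attained at a size $m$ equal to a cumulative count at some distinct loss value (a ``break point''), which is precisely when the top-$m$ group is a threshold group. Between two consecutive break points the newly added points all share one loss value $v$, so writing $c \ge 0$ for the summed excess loss of the strictly-larger points and $d \eqdef v - \bar\ell$, the quantity $a_m$ is proportional to $\phi(m) = c\,m^{k-1} + d\,m^k$ on that block. I would then analyze $\phi'(m) = m^{k-2}\bigl(c(k-1) + dk\,m\bigr)$: since $m^{k-2} > 0$, the sign of $\phi'$ is that of the affine factor, so when $d \le 0$ (using $k \in (0,1]$ and $c \ge 0$) $\phi$ is nonincreasing, and when $d > 0$ its only critical point $m^\star = c(1-k)/(dk)$ is a \emph{minimum}. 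In every case $\phi$ attains its maximum over the block at an endpoint, so $\max_m a_m$ occurs at a break point and a threshold group $\{\ell > u\}$ achieves the high-side maximum; this is exactly the single-trough shape of $f$ plotted in \reffig{lemma1_figure}. The symmetric argument on the low side yields a group $\{\ell < u\}$, and taking the larger of the two sides produces a threshold group attaining $\widehat\mwld_n(\wk)$, as claimed.
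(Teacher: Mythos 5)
Your proof is correct and takes essentially the same route as the paper's: sort by loss to reduce to threshold (prefix/suffix) groups, and resolve ties via the observation that $m \mapsto m^{k-1}(c + dm)$ with $c \ge 0$, $k \in (0,1]$ is either monotone or decreasing-then-increasing, so its maximum over a block of tied losses is attained at an endpoint --- the paper organizes this as a local exchange argument around a maximizer while you run a global maximization over group sizes, but the key derivative computation is identical to the paper's analysis of $f(x) = n^{-k}x^{k-1}|ax+b|$. One bookkeeping correction: for $\phi(m) = c\,m^{k-1} + d\,m^{k}$ to be proportional to $a_m$ on a block, $c$ must be the summed excess of the strictly-larger points over the tied value $v$ (not over $\bar\ell$); this corrected constant is still nonnegative, so your sign analysis and conclusion are unaffected.
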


\begin{proof}

WLOG, assume $\widehat \E[\ell]=0$ and data points are sorted according to their loss in ascending order, i.e., $\ell_1\le \dots\le\ell_n$.
Let $\widehat D(g)$ denote the weighted loss discrepancy for group $g$.

\begin{align}
\widehat D(g) \eqdef \widehat\E [g]^k \pab {\widehat\E [\ell \mid g=1] - \widehat\E [\ell]}
\end{align}
Let $g^\star$ be any group with maximum weighted loss discrepancy.
\begin{align}
g^\star = \text{argmax}_{g\in{\sG}} \widehat D(g)
\end{align} 

WLOG, assume $\widehat \E [\ell \mid g^\star=1] \le \widehat \E [\ell]$ (we can multiply every loss by $-1$ if $\widehat \E [\ell \mid g^\star=1] > \widehat \E [\ell]$).
We now prove there exits an integer $1\le r\le n$ such that $g^\star = \{z_1, \dots, z_r\}$ and $\ell_r \neq \ell_{r+1}$.
Assume this is not the case.
Let $p$ be the smallest number such that $z_p \notin g^\star$ but for a $q > p$  we have $z_q \in g^\star$. 
If $\ell_p < \ell_q$  we can replace $z_p$ by $z_q$ in $g^\star$, this replacement does not change size of $g^\star$ but it increases $\widehat \E [\ell]- \widehat \E [\ell \mid g^\star=1]$ which is a contradiction to the assumption that $g^\star$ is a group with maximum weighted loss discrepancy.

We now prove $\ell_p$ cannot be equal to $\ell_q$.
For contradiction assume they are equal and $\ell_q=\ell_p = a$.
We prove if $\widehat D(g^\star/\{z_q\}) \le \widehat D(g^\star)$ then $\widehat D(g^\star) \le \widehat D(g^\star \cup \{z_p\})$ which is a contradiction with the assumption that $g^\star$ is the group with maximum weighted loss discrepancy.
Assume there are $t$ points inside $g^\star/\{z_q\}$ with sum $s$ then we have:
\begin{align}
&\widehat D(g^\star/\{z_q\}) = \p{\frac{t}{n}}^k\pab{\frac{s}{t}}\\
&\widehat D(g^\star) = \p{\frac{t+1}{n}}^k\pab{\frac{s+a}{t+1}}\\
&\widehat D(g^\star \cup\{z_p\}) = \p{\frac{t+2}{n}}^k\pab{\frac{s+2a}{t+2}}\\
\end{align}

Let $b=s-at$ and define a function $f(x) \eqdef n^{-k}x^{k-1}|ax+b|$. 
Therefore $f(t) = \frac{t^{k-1}}{n^k}|s| = \p {\frac{t}{n}}^k \pab {\frac{s}{t}} =  \widehat D(g^\star/\{z_q\})$, and  $f(t+1) = \frac{(t+1)^{k-1}}{n^k}\pab{s+a}$ is the weighted loss discrepancy for $g^\star$, and finally $f(t+2) = \frac{(t+2)^{k-1}}{n^k} |s+2a|$ is the weighted loss discrepancy for $g^\star \cup \{z_p\}$.

We want to prove $f(x) \le f(x+1) \implies f(x+1) \le f(x+2)$.
Let's look at the derivative of $f$, for $x>0$ we have:
\begin{align}
f' = \frac{ax^k+bx^{k-1}}{|ax^k+bx^{k-1}|}\p { akx^{k-1} + b(k-1)x^{k-2}} = \frac{ax+b}{|ax+b|}x^{k-2} \p {akx + b(k-1)}
\end{align}
Sign of $f'$ changes at $-\frac{b}{a}$ and  $-\frac{b(k-1)}{ak}$; since we assumed $k \le 1$ only one of these two numbers are positive.
Assume $-\frac{b}{a}$ is positive then sign $f'$ is negative in $x \in (0, -\frac{b}{a})$ and become positive in $x \in (-\frac{b}{a}, \infty)$. 
Therefore, $f(x) \le f(x+1)$ implies $f(x+1) \le f(x+2)$.
\reffig{lemma1_figure} shows graph of $f(x)$ for different values of $a$ and $b$ and for $k=0.5$.

This result enables us to compute $\widehat \mwld(\wk)$ efficiently. 
We can sort the data points according to their loss and for any integer $1 \le r \le n$ we can compute weighted loss discrepancy of $g =\{z_1,\dots,z_r\}$ and $g=\{z_{r+1}, \dots, z_n\}$ to find the group with maximum loss discrepancy.
\end{proof}

Throughout the next proof we use the following statements, 
let $0\le a,b,\hat a, \hat b \le 1$:
\begin{align}
\label{eqn:multi}
|ab - \hat a\hat b| = \left|\frac{(a-\hat a)(b+\hat b) + (a+\hat a)(b-\hat b)}{2}\right|\le |a-\hat a| + |b-\hat b|
\end{align}
\begin{align}
\label{eqn:subtract}
\left| |a-b| - |\hat a-\hat b|\right| \le \left| a-b - \hat a+ \hat b\right| \le |a-\hat a| + |b-\hat b|
\end{align}
Assuming  $\frac{\hat a}{\hat b} \le 1$, the following holds:
\begin{align}
\label{eqn:division}
\pab {\frac{a}{b} - \frac{\hat a}{\hat b}} = \frac{\pab{a\hat b +\hat a \hat b - \hat a \hat b - \hat a b}}{b\hat b} = \frac{\pab {\hat b(\hat a - a)- \hat a(\hat b - b)}}{b\hat b}\le \frac{|a-\hat a| + |a- \hat a|}{b}
\end{align}

\EfficientAuditor*
	
\begin{proof}

Let $D(g)$ denote the weighted loss discrepancy of group $g$, i.e., $D(g) \eqdef  \E [g]^k \pab{\E [\ell \mid g] - \E [\ell]}$	 and analogously $\widehat D(g) \eqdef \widehat\E [g]^k \pab{\widehat\E [\ell \mid g] - \widehat \E [\ell]}$.
Let $g^\star = \argmax_{g\in\sG} D(g)$ and analogously let $\widehat g = \argmax_{g\in\sG} \widehat D(g)$.
We will prove for the following two bounds:

	\begin{align}
	\label{eqn:firstBound}
\pr \pb {D(g^\star) - \widehat D(\widehat g) \ge \epsilon} \le \pr \pb {D(g^\star) - \widehat D(g^\star) \ge \epsilon} \le \frac{\delta}{2}\\
		\label{eqn:secondBound} 
\pr \pb {\widehat D(\widehat g) - D(g^\star) \ge \epsilon } \le \pr \pb {\widehat D(\widehat g) - D(\widehat g) \ge \epsilon} \le \frac{\delta}{2}
	\end{align}
	
We start by proving \refeqn{firstBound}, using \refeqns{multi}{subtract} we have:
\begin{align}
\label{eqn:decomposition}
\nonumber\pr \pb {\pab{D(g^\star) - \widehat D(g^\star)}\ge 3t} \le &\pr \pb {\pab{\E [\ell] -  \widehat\E [\ell]}\ge t}+\\
&\pr \pb {\pab{\E [\ell \mid g^\star=1] -  \widehat\E [\ell \mid g^\star=1]}\ge t}+\\
&\pr \pb {\pab{\E [g^\star]^k - \widehat\E [g^\star]^k}\ge t}
\end{align}

We now compute the terms on the right hand side of the inequality. Using Hoeffding inequality we have:

\begin{align}
\pr \pb {\pab{\E [\ell] -  \widehat\E [\ell]}\ge t} \le 2\exp\p{-2t^2n}
\end{align}

Let $\alpha$ and $\widehat\alpha$ denote $\E [g^\star]$ , $\widehat\E [g^\star]$ respectively.

\begin{align}
\pr \pb {\pab{\E [\ell \mid g^\star=1] -  \widehat\E [\ell \mid g^\star=1]}\ge t}\le& \pr \pb {\widehat \alpha < \alpha/2} + \pr \pb{\pab{\E [\ell \mid g^\star=1] -  \widehat\E [\ell \mid g^\star=1]}\ge t \mid \widehat{\alpha} \ge \alpha/2}\\
\le&\exp\p{-n\alpha/8} + 2\exp \p {-t^2n\alpha}
\end{align}

Let $\beta$ be a number between $\alpha$ and $\widehat \alpha$, formally: $\alpha - |\alpha - \widehat \alpha|\le \beta \le \alpha + |\alpha - \widehat \alpha|$.
Using mean value theorem, we have:

\begin{align}
\label{eqn:meanValue}
\pr \pb {\pab{\alpha^k - \widehat\alpha^k}\ge t} =& \pr \pb {k\beta^{k-1}|\alpha - \widehat \alpha| \ge t}\\
\le & \pr \pb {|\alpha - \widehat{\alpha}| \ge t\beta}\\
\le & \pr \pb {|\alpha - \widehat \alpha| \ge \frac{\alpha}{2}} + \pr \pb {|\alpha - \widehat \alpha| \ge t \beta \mid |\alpha - \widehat\alpha| \le \frac{\alpha}{2}} \\
\le&2\exp\p{-n\alpha/12} + \pr \pb {\pab{\alpha - \widehat{\alpha}}\ge \frac{t}{2}\alpha}\\
\le&2\exp\p{-n\alpha/12} + 2\exp\p{-\frac{t^2n\alpha}{12}}
\end{align}

Combining these three and setting $t= \sqrt{\frac{12\log\p{ 18/\delta}}{n\alpha}}$ and assuming $n \ge \frac{12\log\p { 18/\delta}}{\alpha}$, we have:
\begin{align}
\pr \pb{\pab{D(g^\star) - \widehat D(g^\star)}\ge 3t} \le& 2\exp\p{-2t^2n}+\\
	&\exp\p{-n\alpha/8} + 2\exp\p {-t^2n\alpha}+\\
	& 2\exp(-n\alpha/12) + 2\exp\p{-\frac{t^2n\alpha}{{12}}}\\
	\le & \frac{\delta}{2}
\end{align}

Recall that $\alpha$ denotes $\E [g^\star]$. Since we assumed $0\le \ell \le 1$, we have $D(g^\star) \le \alpha ^k$, therefore, the following holds:

\newcommand{\gbound}{\ensuremath{\p {\frac{108\log (18/\delta)}{n}}^\frac{k}{2k+1}}}
\begin{align}
\label{eqn:final1}
D(g^\star) - \widehat D(g^\star) \le \max_\alpha \min \p {3\sqrt{\frac{12\log\p{ 18/\delta}}{n\alpha}}, \alpha^k} = \gbound
\end{align}

\sidecaptionvpos{figure}{t}
\begin{SCfigure}
	\includegraphics[width=0.4\textwidth]{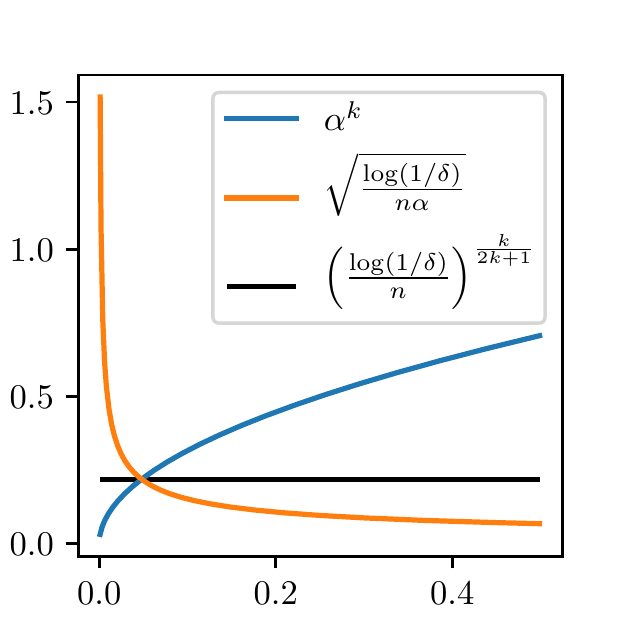}
	\caption{\protect\rule{0ex}{5ex}
	 \label{fig:bound_on_w_k}
	Visualization of \refeqn{final1}.
	Both blue and orange lines are upper bounds for $D(g^\star) -\widehat D(g^\star)$. For small value of $\alpha$, we choose the blue upper bound, and for large value of $\alpha$ we choose the orange upper bound.
	}
\end{SCfigure}
\reffig{bound_on_w_k} shows a visualization of these two upper bounds for $k=\half$.

Note that we use the first upper bound when $\alpha \ge \p {\frac{108\log (18/\delta)}{n}}^\frac{1}{2k+1}$, therefore the constraint on $n$ will be:
\begin{align}
n &\ge \frac{108\log (18/\delta)}{\alpha} \ge \frac{108\log (18/\delta)}{\p {\frac{108\log (18/\delta)}{n}}^\frac{1}{2k+1}} \implies n \ge 108\log (18/\delta)
\end{align}

We now prove Equation~\refeqn{secondBound}.
According to \reflem{computing_U_efficiently}, for a real number $u$, we can represent $\widehat g$ as the group of points with loss less than $u$.
Therefore, it is enough to show the uniform convergence for such groups.
Formally, let $g_u$ be an indicator function  as follows:
\begin{align}
g_u(z)=\begin{cases}
1 & \ell(z) \le u\\ 
0 & o.w.
\end{cases}
\end{align}
We want to prove the following:

\begin{align}
\label{eqn:suprimum}
\pr \pb{ \sup_{u \in [0,1]} \pab { D(g_u) - \widehat D(g_u)} \ge \epsilon } \le \frac{\delta}{2}.
\end{align}
Using Dvoretzky–Kiefer–Wolfowitz inequality \citep{massart1990tight} we have:
\begin{align}
\label{eqn:DKW}
\pr \left[\sup_{x} \left|F(x) - \widehat F (x)\right| \ge \epsilon \right]  \le 2\exp (-2\epsilon^2n)
\end{align}

Similar to the procedure for proving \refeqn{firstBound}, we first prove \refeqn{suprimum} for all groups with $\widehat \E [g_u] \ge \alpha$. 
We then combine the result bound with $\alpha^k$.

As showed in \refeqn{decomposition} we only need to bound the three components.
We represent each component in term of cumulative distribution function ($F(x)$ in \refeqn{DKW}) and then use the same techniques for \refeqn{firstBound} to bound each one of them.

\begin{align}
\pr \pb{ \sup_{u \in [0,1]} \pab { D(g_u) - \widehat D(g_u)} \ge 3t } \le & \pr \pb {\pab {\E [\ell] - \widehat \E [\ell]} \ge t} + \\
&\pr \pb {\sup_{u \in [0,1]} \pab {\E [\ell \mid g_u=1] - \widehat \E [\ell \mid g_u=1]} \ge t}+\\
&\pr \pb {\sup_{u \in [0,1]} \pab {\E [g_u]^k - \widehat \E [g_u]^k} \ge t}
\end{align}

We can rewrite $\E[\ell \mid g_u=1]$ in term of CDF, and bound the second term in the RHS as follows:

\begin{align}
\E [\ell \mid g_u=1] = \frac{\int_0^u xdF(x)}{\int_0^u dF(x)}=\frac{xF(x) |_0^u - \int_0^u F(x)dx}{\int_0^u dF(x)} =\frac{uF(u) - \int_0^u F(x)dx}{F(u)}
\end{align}
\begin{align}
\pab {\E [\ell \mid g_u=1] - \widehat \E [\ell \mid g_u=1]} =& \pab{\frac{uF(u) - \int_0^u F(x)dx}{F(u)} -  \frac{u\widehat F(u) - \int_0^u \widehat F(x)dx}{\widehat F(u)}}\\
\label{eqn:last} \le & \frac{3\pab {F(u) - \widehat F(u)}}{\widehat F(u)}
\end{align}

We derived \refeqn{last} with \refeqn{division}, Recall we assumed $\widehat \E [\ell \mid g_u] = \widehat F(u) \ge \alpha$; therefore, we have:
\begin{align}
 \pr \pb {\sup_{u \in [0,1]}\pab{\widehat\E [\ell \mid g_u=1] - \E [\ell \mid g_u=1]} \ge t} 
  \le&\pr \pb {\sup_{u \in [0,1]}\frac{3\pab{F(u) - \widehat F(u)}}{\widehat F(u)} \ge t} \\
  \le& 2\exp (-2t^2\alpha^2n/9)
\end{align}

Using mean value theorem as explained in \refeqn{meanValue}, the following holds:
\begin{align}
\pab{\E [g_u]^k - \widehat \E [g_u]^k} = \pab {F(u)^k - \widehat F(u)^k} = k \beta^{k-1}\pab {F(u) - \widehat F(u)}
\end{align}

Again Recall we assumed $\widehat F(u) \ge \alpha$:
\begin{align}
\nonumber\pr \pb {\sup_{u \in [0,1]}\pab {\E [g_u]^k - \widehat{\E}[g_u]^k}\ge t} \le& \pr \pb {\pab{F(u) - \hat F(u)} \ge \frac{\alpha}{2}}+\pr \pb {\sup_{u \in [0,1]}\pab {F(u) - \widehat F(u)} \ge \frac{t\alpha}{2}}\\
\le & 2\exp\p{-n\alpha/12} + 2\exp (-t^2\alpha^2n/2)
\end{align}

Combining all three and setting $t=\sqrt{\frac{12\log (18/\delta)}{n\alpha^2}}$  we have:

\begin{align}
\pr \pb {\sup_{u \in [0,1]} \pab{ \widehat D(g_u) - D(g_u)} \ge 3t} \le & 2\exp\p{-2t^2n} + 2\exp (-2t^2\alpha^2n/9) +\\ &2\exp\p{-n\alpha/12} + 2\exp (-t^2\alpha^2n/2) \\
\le & \frac{\delta}{2}
\end{align}

Now combining the above bound with $\alpha^k$, we have:

\begin{align}
\label{eqn:final2}
\sup_{u \in [0,1]} \p{ \widehat D(g_u) - D(g_u)}  \le \max_\alpha \min \p {3\sqrt{\frac{12\log\p{ 18/\delta}}{n\alpha^2}}, \alpha^k} \le {\p {\frac{108\log (18/\delta)}{n}}^\frac{k}{2k+2}}
\end{align}

Combining \refeqn{final1} and \refeqn{final2} completes the proof.

\end{proof}

\section{Datasets}
We divided all datasets using a 70--30 train-test split.
For German dataset, since it has only 1k data points, we split the dataset 10 times and report the average result for this dataset.

\label{app:dataset_appendix}

\begin{table*}
		\newcommand{\scc}{0.75}
	\begin{minipage}{0.3\textwidth}
		\centering
	\scalebox{\scc}{
		\begin{tabular}{lll}
			\toprule
			\multicolumn{3}{c}{\bf Communities and Crime} \\ \midrule
			\bf Black Percentage & Low & High\\
			\bf White Percentage& Low & High\\
			\bf Asian Percentage& Low & High\\
			\bf Hispanic Percentage& Low & High\\
			\bottomrule
		\end{tabular}}
			\end{minipage}\quad%
			\begin{minipage}{0.6\textwidth}
				\centering
		\scalebox{\scc}{
			\begin{tabular}{llllll}
				\toprule
				\multicolumn{6}{c}{\bf Income} \\ \midrule
				\bf Race &  White&  Asian-Pac-Islander & Amer-Indian-Eskimo&  Black & Other\\ 
				\bf Age &  Less than 25& Between 25-45 & Greater than 45&&\\
				\bf Gender & Female & Male & &&\\ 
				\bottomrule
			\end{tabular}}
		\end{minipage}
		
		\caption{\label{tab:sensitiveFeatures} Sensitive attributes ($A$) and the different values they can get for C\&C and Income dataset.}
		\end{table*}


{\bf Communities and Crime} \footnote{\tiny\url{http://archive.ics.uci.edu/ml/datasets/communities+and+crime}}  
dataset represents communities within the United States. 
The data combines socio-economic data from the 1990 US Census, law enforcement data from the 1990 US LEMAS survey, and crime data from the 1995 FBI UCR.
The total number of records is 1994,
and the number of attributes is 122.  
The variable to be predicted is the rate of violent crime in the community.
Following \cite{kearns2018gerrymandering}, we convert the real-valued rate of violent crime to a binary label.
The label of a community is $1$ if that community is among the 70\% of communities with the highest crime rate and $0$ otherwise.
We removed attributes with unknown variables and attributes which tagged as non-predictive in the dataset resulting in 99 attributes. 
We divide data to 70-30 split for train and test.
In this dataset, we chose four continuous attributes related to race as sensitive attributes.
Each one of them indicates the percentage of a specific race in the population.
We discretized these attributes and assigned low if that percentage is less 0.2 and high otherwise.
\reftab{sensitiveFeatures} (left) shows these four attributes and their different values.
Therefore, we have $T=\Pi_i|A_i|=16$ different settings for sensitive attributes and
$2^{\Pi_i|A_i|}-1 = 65535$ sensitive groups.

{\bf Census Income} \footnote{\tiny\url{https://archive.ics.uci.edu/ml/datasets/census+income}}
represents individual attributes and their salary based on census data (also known as "Adult" dataset).
The total number of records is 48842,
and the number of attributes is 14.  
The variable to be predicted is either an individual salary is more than \$50K/yr. 
We removed an attribute named fnlwgt which indicates the number of people the census takers believe that observation represents. 
We used the default 30-70 split provided in the dataset.
We chose the race, age, and gender as sensitive attributes.
We discretized age into three categories: less than 25, between 25 and 45, and greater than 45. 
\reftab{sensitiveFeatures} (right) shows the sensitive attributes and their possible values.
This dataset has $T= \Pi_i |A_i|=30$ different settings for sensitive attributes and $2^{\Pi_i |A_i|} -1 =  2^{30}-1$ sensitive groups.

{\bf  German Credit}\footnote{\tiny\url{https://archive.ics.uci.edu/ml/datasets/statlog+(german+credit+data)}} represents individual attributes and their credit rating (two classes of Good and Bad).
The total number of records is 1000,
and the number of attributes is 20.  
The variable to be predicted is either an individual credit rating is good or bad. 
Since the number of records is small in this dataset, we divide data to 30-70 for train and test 10 times and report the average over these 10 runs.
We chose age and gender as sensitive attributes.
We discretized age same as Income dataset. 
Therefore, this dataset has $T=\Pi_i |A_i| = 6$ and $63$ sensitive groups.

{\bf  COMPAS\_5}
\footnote{\tiny\url{https://www.propublica.org/article/how-we-analyzed-the-compas-recidivism-algorithm}} is a dataset compiled by ProPublica
\footnote{\tiny\url{https://github.com/propublica/compas-analysis}}, it contains a list of criminal offenders screened through the COMPAS (Correctional Offender Management Profiling for Alternative Sanctions) tool in Broward County, Florida during 2013-2014. 
Following \cite{zafar2017fairness} we only used 5 attributes of this dataset: race, gender, age, prior counts, and crime class.
The variable to be predicted is whether individuals recidivated within two years after the screening.  
Considering age, gender, and race as sensitive attributes, the number of sensitive groups in this dataset is as same as Income dataset.

\subsection{Experiment details}
We tune $\eta$ between $\{0.1, 0.01, 0.001, 0.0001\}$ and use grid-search to find the best $\eta$.
In \reffig{showing_efficiency}, we set $\lambda$ in \refeqn{\lv} between $U=\{0, 0.05, 0.1,0.2,0.4,0.6,0.8,1,2,3\}$; and for  $\lambda$ in \refeqn{\clv} we set it to $\{2u \mid u \in U\}$.
In \reftab{group_comparison_compas_5}, we pre-process COMPAS\_5 in a similar fashion to \citet{zafar2017fairness}, only keeping examples with race equal to either black or white and considering race as the only sensitive attribute.
We choose two points on the accuracy-fairness trade-off curve for comparison (one that has accuracy similar to \citet{zafar2017fairness} and one with maximum $\lambda=9$).
\reftab{group_comparison_compas_5} shows the accuracy, \dfn{}, \dfp{} (averaged over 10 runs) for different methods.

\section{Additional figures}
\label{sec:additional_figures}
\begin{figure*}[t]
	\centering
	\begin{subfigure}[b]{0.48\textwidth}
		\centering
		\includegraphics[width=\textwidth]{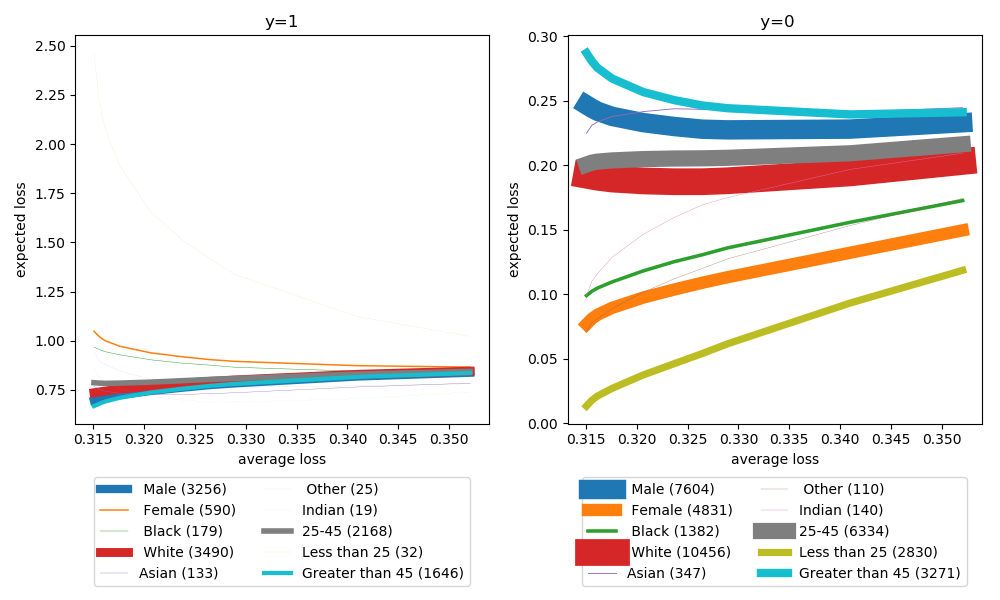}
		\caption{\label{fig:group_visualization}}
	\end{subfigure}\hfill%
	\begin{subfigure}[b]{0.48\textwidth}
		\centering
		\includegraphics[width=\textwidth]{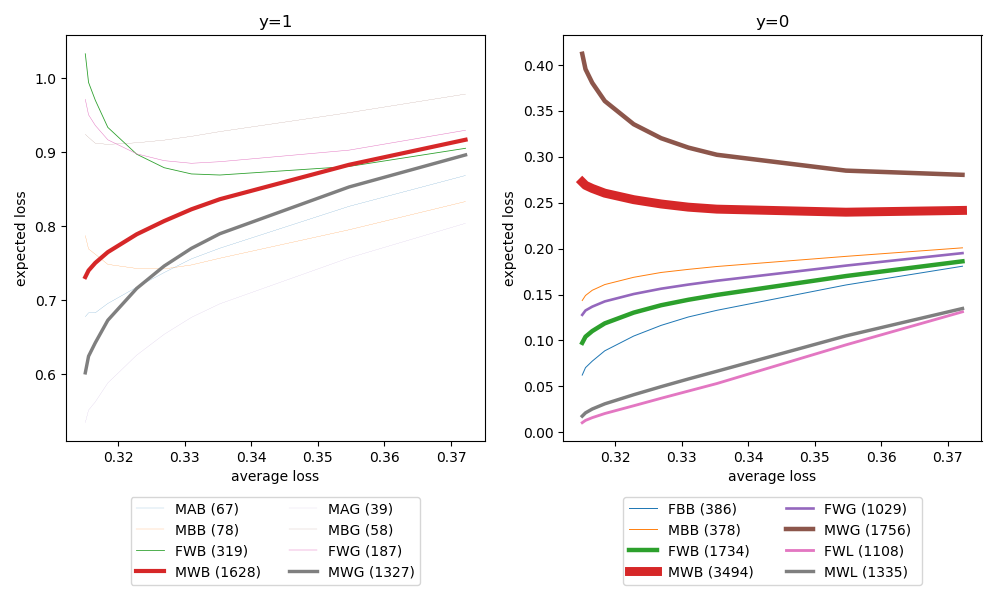}
		\caption{\label{fig:subgroup_visualization} }
	\end{subfigure}
	\caption{\label{fig:group_subgroup_analysis_income} Income dataset. The thickness of the lines is proportional to the size of groups. 
		In the legend, the number in parenthesis indicates the number of data points in that group. 
		In (b) Each group is specified by the first character of its gender, race, and age respectively; e.g., FBB denotes Female-Black-Between 25-45.
		(a) Effect of CLV on sensitive groups defined on a single sensitive attribute.
		(b) 
		Groups with identical sensitive attributes (we only show the top 8 groups with the most data points).
		Coarse loss variance provides a guarantee on any combination of these settings (e.g., individuals who are not White Male or their age is less than 25).}
\end{figure*}
\begin{figure}[t]
	\centering
	\includegraphics[width=0.7\textwidth]{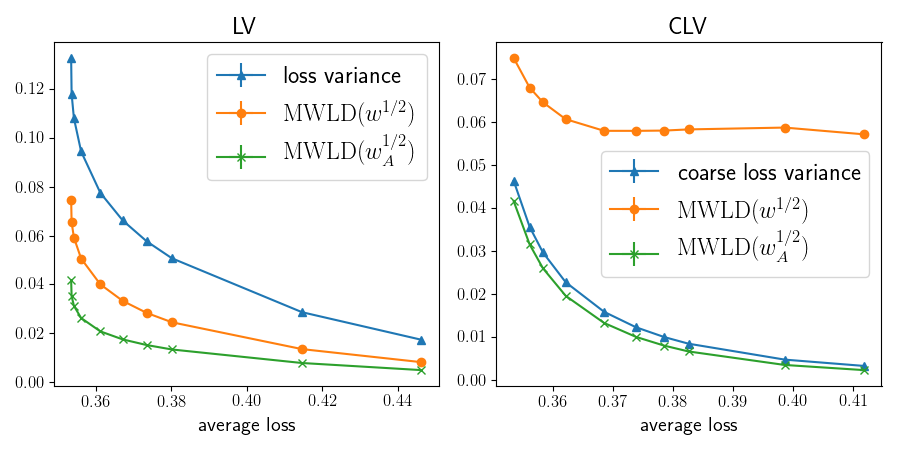}
	
	\caption{\label{fig:group_subgroup_analysis_candc} Effect of (C)LV on maximum weighted loss discrepancy ($\mwld(\ws)$) in C\&C dataset on data points with $y=0$. 
		Left: Loss variance is an upper bound for $\mwld(\ws)$ and consequently $\mwld(\wsa)$.
		Right: Coarse loss variance is a tight upper bound for $\mwld(\wsa)$, but it does not guarantee any upper bound $\mwld(\ws)$.
	}
\end{figure}


We first study the effect of CLV on the losses of groups.
We start with groups defined on a single sensitive attribute.
As shown in \reffig{group_visualization}, in Income dataset, groups such as Males or Whites have a high loss when their label is zero and low loss when their label is one; which means they usually are predicted to have a high paid job even when they have a low paid job.
On the other hand, groups such as Females or Blacks have a high loss when their label is one and a low loss with label zero, which means they usually are predicted to have a low paid job even when they have a high paid job.
As shown, by increasing $\lambda$\refeqn{\clv} losses of these groups become closer to each other.
\reffig{subgroup_visualization} shows the effects of CLV on groups that have identical sensitive attributes.
Note that, Coarse loss variance provides a guarantee on any group consists of any combination of these settings, not only groups defined on a single sensitive attribute. 
For example, we can create a group containing individuals who are not white males or their age is less than 25.


We now study the gap between loss variance and $\mwld(\ws)$ as well as the gap between coarse loss variance and $\mwld(\wsa)$.
Recall that maximum weighted loss discrepancy $\mwld(w)$ (\refdef{mwld}) is the worst-case difference between the loss of a group and the population loss weighted by $w$.
Note that we use conditional loss variance in the experiment; however, for simplicity, here we assume only data points with $y=0$ and compute $\mwld(\ws)$ and $\mwld(\wsa)$ \emph{only} for these data points and drop the conditioning on $y=0$ from loss variance in the sequel.

\reffig{group_subgroup_analysis_candc} shows the value of $\mwld(\ws)$ and $\mwld(\wsa)$ for different values of the $\lambda$ in \refeqn{\lv} and \refeqn{\clv}. 
In LV, loss variance gets penalized which is shown as an upper bound for $\mwld(\ws)$ and subsequently $\mwld(\wsa)$.
In CLV, coarse loss variance gets penalized which is shown as a tight upper bound for $\mwld(\wsa)$; as expected it is not an upper bound for $\mwld(\ws)$.
Note that for reaching the same value of $\mwld(\wsa)$, LV has an average error of $0.45$,
while CLV has an average error of $0.415$, confirming that if coarse loss variance is the quantity of interest,
regularizing with it is better.

\end{document}